\documentclass[11pt]{article}

\usepackage{amsmath, amsthm, amssymb, mathtools}
\usepackage{fullpage}
\usepackage{subcaption}
\usepackage{cleveref}

\newtheorem{theorem}{Theorem}
\newtheorem{lemma}{Lemma}
\newtheorem{definition}{Definition}
\newtheorem{example}{Example}

\makeatletter
\def\@fnsymbol#1{\ensuremath{\ifcase#1\or \dagger\or \ddagger\or
   \mathsection\or \mathparagraph\or \|\or **\or \dagger\dagger
   \or \ddagger\ddagger \else\@ctrerr\fi}}
\makeatother
\newcommand*\samethanks[1][\value{footnote}]{\footnotemark[#1]}

\begin{document}

\title{Compact Conformal Subgraphs}
\author{Sreenivas Gollapudi\thanks{Google Research, Mountain View, CA, USA. Email:{\tt \{sgollapu,kostaskollias\}@google.com}} \and Kostas Kollias\samethanks[1] \and Kamesh Munagala\thanks{Department of Computer Science, Duke University, Durham, NC, USA. Email: {\tt kamesh@cs.duke.edu}. This work was done while the author was visiting Google Research.} \and Aravindan Vijayaraghavan\thanks{Northwestern University, Evanston, IL, USA. Email: {\tt aravindv@northwestern.edu}}}
\date{}
\maketitle

\begin{abstract}
Conformal prediction provides rigorous, distribution-free uncertainty guarantees, but often yields prohibitively large prediction sets in structured domains such as routing, planning, or sequential recommendation. We introduce \emph{graph-based conformal compression}, a framework for constructing compact subgraphs that preserve statistical validity while reducing structural complexity. We formulate compression as selecting a smallest subgraph capturing a prescribed fraction of the probability mass, and reduce to a weighted version of densest $k$-subgraphs in hypergraphs, in the regime where the subgraph has a large fraction of edges. We design efficient approximation algorithms that achieve constant factor coverage and size trade-offs. Crucially, we prove that our relaxation satisfies a \emph{monotonicity} property, derived from a connection to parametric minimum cuts, which guarantees the nestedness required for valid conformal guarantees. 
 Our results on the one hand bridge efficient conformal prediction with combinatorial graph compression via monotonicity, to provide rigorous guarantees on both statistical validity, and compression or size. On the other hand, they also highlight an algorithmic regime, distinct from classical densest-$k$-subgraph hardness settings, where the problem can be approximated efficiently. 
We finally validate our algorithmic approach via simulations for trip planning and navigation, and compare to natural baselines.
%

\end{abstract}

\section{Introduction}

Conformal prediction~\cite{vovk2005algorithmic,shafer2008tutorial,angelopoulos2021gentle}
provides a general framework for attaching statistically valid uncertainty
quantification to black-box predictive models.  Given a model with predicted outputs~$A$ and true outcomes~$B$, conformal methods calibrate a distance
or nonconformity score~$f(A,B)$ on a held-out calibration set and choose a
threshold~$d$ so that, under exchangeability of samples, $
  \mathbb{P}(f(A^*,B^*)\le d) \ge \phi ,$ where $(A^*,B^*)$ corresponds to a new test pair.  The resulting
\emph{conformal set} $
  S(A^*) = \{B:\,f(A^*,B)\le d\}
$
contains the true outcome~$B^*$ with probability at least~$\phi$.
Conformal prediction thus converts any point predictor $A$ into a distribution-free set predictor $S(A)$ with finite-sample marginal coverage.

As an example, in  route planning and navigation systems, a predictive model proposes a route based on observed contextual features. Such features may include the user profile (e.g., commuter, tourist), real-time traffic conditions, weather, time of day, or the nature of the query (such as ``fastest'', ``scenic'', or ``kid-friendly'').  Despite the model's best prediction, a user may choose an alternate path for unmodeled reasons: personal preferences, temporary closures, or new information encountered en route.  Consequently, the model output $A_t$ (the suggested path) at a given epoch $t$ may differ from the actual path $B_t$ that the user traverses.  Over time, the system accumulates a history of such pairs
$\{(A_t,B_t)\}$, where $A_t$ is the model's suggested route and $B_t$ is the corresponding ground-truth route taken by the user.

The conformal prediction framework now works as follows:  For a new prediction $A^*$, we wish to construct a set of plausible user routes $S(A^*)$ that is guaranteed (under exchangeability) to contain the eventual ground-truth route $B^*$ with a prescribed probability level~$\phi$. 
Ideally we would also like $S(A^*)$ to be as compact or small as possible. 

The route planning task described above is an example of a conformal prediction task over graphs. 
This framework also captures other settings where realized behaviors diverge from model predictions, for instance in \emph{trip planning}, where a model suggests an itinerary $A_t$ but the user follows $B_t$. Route planning also arises in routing autonomous vehicles, or in shipment logistics, while trip planning is akin to content recommendations. Other applications include settings where the model predicts 
a sequence of clinical interventions (while physicians or patients may modify the plan), or predicts actions in business or manufacturing processes. These different applications can be modeled using a common graph-based conformal framework. 


\subsection{Graph-based Conformal Compression} 

In structured prediction tasks such as route planning, itinerary recommendation, and logistics, the conformal set~$S(A^*)$ often becomes unwieldy, containing an enormous number of candidate outputs. Directly returning all elements of~$S(A^*)$, such as a raw, unstructured list of thousands of potential trajectories, is impractical and leads to cognitive overload for end users. We address this by introducing \emph{graph-based conformal compression}, a framework that summarizes the conformal set into a compact subgraph, preserving statistical coverage guarantees while minimizing structural complexity. 

Unlike an explicit enumeration of paths (which can grow exponentially), a \emph{conformal subgraph} provides a tractable and interpretable summary, even when the underlying predictive model is highly uncertain. Conversely, this framework provides a novel systems motivation for conformal prediction itself: by using past data to identify a high-probability, compressed subgraph, we safely prune the network and drastically reduce the search space for downstream routing, planning, and optimization algorithms.

Our main contribution is therefore in bridging the gap between conformal prediction and combinatorial optimization: We formally define the conformal subgraph problem and present a simple, computationally efficient algorithm that provably achieves (marginal) coverage in the distribution-free setting, while also being approximately efficient in terms of the size of the subgraph returned for that coverage bound.

In more detail, our main contributions are threefold:

\textbf{1. Algorithmic Formulation (Sections~\ref{sec:model},~\ref{sec:statement})} 
We develop an algorithmic, graph-based formulation of conformal compression that treats compression as the problem of selecting a compact subgraph  that contains the ground-truth output~$B^*$ with high probability.  We formalize the interaction between the conformal and compression guarantees in Section~\ref{sec:model}, and map the statistical goal of conformal coverage to a weighted hypergraph optimization problem in Section~\ref{sec:statement}. Our framework is flexible, accommodating (i) the classic distribution-free setting~\cite{vovk2005algorithmic} with variable graph contexts (i.e., $(A_t,B_t)$ for different $t$ corresponding to different underlying graphs and source-destination pairs), (ii) its specialization to a fixed graph context (i.e., the graph and source-destination pair are the same for different $t$), (iii) model probabilities derived directly from the predictive model \cite{angelopoulos2022learntestcalibratingpredictive,zhang2025conformalstructuredprediction} (where we assume access to a possibly mis-specified conditional distribution on $B$ given $A$), and (iv) risk-controlling prediction sets~\cite{Risk} to control general loss functions (e.g., controlling the expected fraction of missed route segments, or some other measure of deviation) rather than just 0-1 marginal coverage. We also show how our ideas can be applied to achieve conditional coverage~\cite{pmlr-v25-vovk12} in addition to marginal coverage.

\textbf{2. Efficiency Guarantees (Section~\ref{sec:algorithm}).} 
We connect conformal prediction with the classical optimization problem of \emph{densest-$k$-subgraph} selection in hypergraphs, where the goal is to select the subgraph with fewest vertices that preserves a specified fraction of the total weight of hyper-edges for a suitably defined weight function. For the resulting optimization problem, that we term the {\em conformal subgraph problem}, we develop provable guarantees on {\em efficiency}, that is, the size of the subgraph versus the conformal guarantee. We identify that conformal compression operates in a specific algorithmic regime where the subgraph retains a large fraction of hyper-edges, that is distinct from the classical, hard-to-approximate setting~\cite{bhaskara2010dks,feige2001dense,manurangsi2017hardness}. We show that in this high-coverage regime, the problem admits an efficient bicriteria constant-factor approximation (approximating both the vertex and hyper-edge set sizes) via a simple algorithm based on linear program (LP) rounding. We show in Appendix~\ref{sec:sampling} that this approach scales to exponentially large hyper-edge sets via sampling. 

\textbf{3. Statistical Validity via Monotonicity (Section~\ref{sec:algorithm}).} 
For a compression algorithm to support valid calibration in a distribution-free setting (or in the presence of a loss function~\cite{Risk}), it must produce a \emph{nested} sequence of subgraphs as the target coverage varies (monotonicity). Though the optimally compressed solution for a coverage bound is not monotone in the coverage bound (see Example~\ref{eg:monotone}), we prove that our approximation algorithm satisfies monotonicity. We show this by reinterpreting our algorithm via \emph{parametric minimum cuts}~\cite{gallo1989fast,ChekuriQT}, which also has the advantage of yielding an algorithm with quadratic running time. This result is one of our main contributions, and ensures that our method is a statistically valid conformal procedure for the classic distribution-free setting, both with variable and fixed graph contexts, while also enabling rigorous guarantees on both compression and validity. 

We view our paper as being primarily a theoretical paper, with the main contributions being conceptual or theoretical (in establishing provable validity and efficiency guarantees for conformal compression). Nevertheless, we present a set of experiments that provide a proof of concept that the proposed method is likely to work well in practice.  In Appendix~\ref{sec:experiments}, we complement our theoretical results with a simulation study on the canonical applications of trip planning and noisy navigation. In the former setting, vertices are activities, hyperedges are itineraries, and a conformal subgraph is a set of activities that captures a large fraction of itineraries; while in the latter setting, vertices are graph edges, hyperedges are navigation paths, and a conformal subgraph is a subgraph that captures a large fraction of routes. In both cases, we show that our algorithm either beats natural greedy baselines on the trade-off between compression and conformal calibration, or recovers a planted solution. We complement this with experiments on a real-world traffic dataset.

\subsection{Related Work}

\paragraph{Conformal Prediction.}
Conformal prediction has been applied across a wide range of areas
including regression~\cite{lei2018distribution,romano2019conformalized},
classification~\cite{vovk2005algorithmic,shafer2008tutorial},
structured models~\cite{zhang2025conformalstructuredprediction}, and more recently, outputs of language models~\cite{mohri2024languagemodelsconformalfactuality,cherian2024largelanguagemodelvalidity}. Our work differs in that it seeks compact representations of conformal prediction sets themselves, rather than improving the base predictor.   

There are recent extensions of conformal prediction to large or combinatorial decision problems, such as off-policy prediction~\cite{zhang2023conformal_offpolicy,taufiq2022conformal_copp} and reinforcement learning~\cite{plancp_2024,strawn2023conformal_psf}, though these works typically focus on constructing valid predictive intervals or uncertainty envelopes rather than on compressing such sets into small structural summaries.  

\paragraph{Efficiency in Conformal Prediction.} 
While conformal prediction guarantees validity by definition, minimizing the size (volume) of the prediction set, referred to as \textit{efficiency}, is challenging, with very  few recent results focusing on provable (as opposed to heuristic) efficiency of the solution (in our case, the size-optimality of the compressed graph). However, both provable validity and efficiency are a requirement, since validity is often trivial without efficiency (e.g., just output the entire graph).  

Works focusing on the dual goals of conformal validity and efficiency include \cite{lei2013distribution,sadinle2016least,izbicki2020flexible,izbicki2022cd,kiyani2024length} for prediction sets and regression. The works closest to our work are \cite{gao2025volume, gao2025highdimensional, gao2026ellipsoid} which provide conformal guarantees with approximate size optimality in an unsupervised setting for certain families of confidence sets like balls and ellipsoids. This is related to the fixed-context setting considered in Sections~\ref{sec:fixed} and~\ref{sec:fixed_optimal}.
In contrast, in the combinatorial tasks we study, the output space is large and also precludes any posterior estimation. For instance, in routing, different calibration samples could have different source-destination pairs and different network edges. Our main contribution is developing techniques for conformal prediction in this more challenging setting that addresses both computational and statistical efficiency.


Our work is also related to recent work on structured conformal prediction \cite{zhang2025conformalstructuredprediction}, which formulates conformal compression via model probabilities in the  ``learn--then--test'' framework~\cite{angelopoulos2022learntestcalibratingpredictive}. That work constructs interpretable prediction sets by searching within a structured label family specified as a DAG. In contrast, we start from a calibrated conformal set, which may be large and unstructured, and develop algorithmic methods for compressing it into a small subgraph. More crucially, their approach is not monotone and hence needs model probabilities, while we obtain distribution-free calibration via showing monotonicity of the approximate compressor. 

\paragraph{Densest Subgraph Problems.}
Our algorithms and analysis draw on a large body of work on densest subgraph discovery. The \emph{densest ratio subgraph} problem that maximizes the average degree admits efficient polynomial-time algorithms~\cite{goldberg1984finding,charikar2000greedy}, while its fixed-size variant, the \emph{densest-$k$-subgraph (DkS)} problem that restricts the subgraph to have at most $k$ vertices, is NP-hard and is conjectured to be inapproximable within polynomial factors~\cite{feige2001dense,bhaskara2010dks,manurangsi2017hardness}. The \emph{smallest-$p$-edge subgraph} (SpES) or \emph{minimum-$p$-union} problem~\cite{chlamtac2016dks} is a complementary minimization form, with similar inapproximability results.

The strong inapproximability results for densest $k$-subgraph are proved in parameter regimes where the target subgraph has number of edges sublinear in $m$, the total number of edges.  Those lower bounds therefore do not directly preclude efficient algorithms when the sought subgraph has $\Theta(m)$ edges, which is our regime of interest in conformal compression.  Further, the Lagrangian of our LP formulation in Section~\ref{sec:algorithm} is identical to the Lagrangian for densest ratio subhypergraphs, hence reducing the problem to \emph{parametric min-cut} on bipartite graphs~\cite{goldberg1984finding,gallo1989fast,ChekuriQT}. This connection is crucial to showing monotonicity and valid calibration. 


\section{Modeling Conformal Compression}
\label{sec:model}

We now present the  graph-based conformal compression framework, beginning with the classical distribution-free conformal calibration model.  We then describe a specialization to fixed contexts, a version with model probabilities, and finally extensions to risk controlled and conditional coverage. We show that all these yield the same underlying optimization problem that we model in Section~\ref{sec:statement}.

\subsection{Distribution-free Variable-Context Setting}
\label{sec:free}
We adopt a split conformal approach~\cite{lei2013distribution,angelopoulos2021gentle} to guarantee marginal coverage without assumptions on the model distribution. Let $\mathcal{D}_{\text{cal}} = \{(A_t, B_t)\}_{t=1}^T$ denote the calibration data. For concreteness, we assume $(A_t,B_t)$ are paths in a road network, though this easily extends to other applications. Here $A_t$ is the model-predicted route and $B_t$ is the ground-truth route chosen by the user, given a context $Q_t$, comprising of a graph realization and a source-sink pair. Note that $Q_t$ can be different for different $t$. At test time, we observe the model output $A^*$. Given a marginal coverage guarantee $\phi \in [0,1]$, the goal is to output a set $S(A^*)$ such that $\mathbb{P}(B^* \in S(A^*)) \ge \phi$, where $B^*$ is the unknown ground truth. The probability is well-defined under an {\em exchangeability assumption} --- the samples $\{(A_t,B_t)\}_{t=1}^T$ and $(A^*,B^*)$ are realizations of exchangeable random variables, for instance, $i.i.d.$ samples from an unknown distribution.\footnote{To clarify, we are not assuming the nodes or edges of the graph are exchangeable. The assumption applies to the pairs of (model prediction, ground-truth trajectory) $(A_t, B_t)$.} 

\paragraph{Stage 1: Pre-Filtering.} This step mirrors classical distribution-free conformal prediction. We partition the set $\mathcal D_{\text{cal}}$ into two disjoint subsets, $\mathcal{D}_1$ and $\mathcal{D}_2$. Using $\mathcal{D}_1$, we calibrate the distance threshold. Let $f(A, B) \ge 0$ be a nonconformity score that measures how well a candidate route~$B$ aligns with the model prediction~$A$. Examples  of distance functions include the number of edges on which $A$ and $B$ differ; the symmetric difference in total travel time or distance; or an embedding distance in some feature space. 

Fix a parameter $\delta \ll \phi$. Now compute a threshold $d^*$ as follows: For each $(A_i, B_i) \in \mathcal{D}_1$, let 
\[
  S_{d}(A_i) = \{\,B:\ f(A_i,B)\le d\,\}.
\]
Now define the score
$$ \gamma_i = \min \{ d  \ge 0: B_i \in S_d(A_i) \}.$$
Let $d^*$ denote the $\lceil (1-\delta) (|\mathcal{D}_1|+1) \rceil$-th smallest value among these scores. For a new test pair $(A^*,B^*)$, this defines an initial set of plausible paths $S_{d^*}(A^*)$, and the exchangeability of $(A^*,B^*)$ with $\mathcal{D}_1$ implies~\cite{angelopoulos2021gentle,vovk2005algorithmic} that:
$$ \mathbb{P}(B^* \in S_{d^*}(A^*)) \ge 1-\delta.$$
Note now that $S_{d^*}(A^*)$ is simply a ``bag of paths'', an unstructured collection of trajectories that may be large\footnote{See Appendix~\ref{sec:sampling} for techniques to efficiently sample this set.}, and the goal now is to \emph{compress} this bag of paths into a compact subgraph while preserving a conformal guarantee.  

\paragraph{Stage 2: Compression.} This step is the novelty of our work. We treat this as a second filtering step parameterized by a \emph{compression score} $\tau \in [0,1]$. Let the set $S_{d^*}(A^*)$ consist of $M$ valid paths. We assign a utility score $w_p$ to each path $p \in S_{d^*}(A^*)$. This is typically the uniform scores $w_p = 1/M$, though the procedure below is a well-defined distribution-free conformal calibration procedure for arbitrary score assignments. We seek a subgraph of the road network that minimizes the number of edges while covering a fraction $\tau$ of the total utility score. Crucially, to preserve statistical validity, we cannot simply optimize for a fixed $\tau$. Instead, we require the following property:

\begin{definition}[Monotonicity]
\label{def:monotone}
An algorithm for subgraph construction is said to be \emph{monotone} if it generates a \textbf{nested sequence} of subgraphs $K_\tau(A^*)$ such that:
\[ \tau_1 < \tau_2 \implies K_{\tau_1}(A^*) \subseteq K_{\tau_2}(A^*). \]
\end{definition}

This monotonicity allows us to treat $\tau$ as a calibration parameter. We use the second calibration split $\mathcal{D}_2$ to find the specific $\tau^*$ required to ensure the final subgraph satisfies the overall conformal guarantee $\phi$.

\paragraph{Algorithm.}
For each example $(A_i, B_i) \in \mathcal{D}_2$, we calculate the set $S_{d^*}(A_i)$ and run the monotone compression algorithm to obtain the sequence $\{K_\tau\}$. We compute a nested nonconformity score $\eta_i$:
\[
\eta_i = \begin{cases} 
\min \{ \tau \in [0,1] : B_i \subseteq K_\tau(A_i) \} & \text{if } B_i \in S_{d^*}(A_i) \\
1 & \text{if } B_i \notin S_{d^*}(A_i)
\end{cases}
\]
Intuitively, $\eta_i$ represents the minimum subgraph mass required to cover the ground truth. If the ground truth was lost in Stage 1 (distance $> d^*$), the score is maximal (1). We then define $\tau^*$ as the $\lceil \phi (|\mathcal{D}_2|+1) \rceil$-th smallest value among these scores. The final prediction for a test input $A^*$ is the subgraph $K_{\tau^*}(A^*)$. 

The lemma below shows the above procedure is a distribution-free conformal calibration procedure. 

\begin{lemma}[Conformal Validity, proved in Appendix~\ref{app:conformal_proof}]
\label{lem:conformal_proof}
The subgraph $K_{\tau^*}(A^*)$ satisfies the marginal guarantee
\[ \mathbb{P}(B^* \subseteq K_{\tau^*}(A^*)) \ge \phi - \delta. \]
\end{lemma}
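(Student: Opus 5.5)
The proof combines two split-conformal guarantees, one from each stage, with a union bound. Throughout, condition on $\mathcal{D}_1$, so the threshold $d^*$ is a fixed quantity. Define the score map
\[
  s(A,B) = \min\{\tau \in [0,1] : B \subseteq K_\tau(A)\}
\]
when $B \in S_{d^*}(A)$, and $s(A,B) = 1$ otherwise. The sets $S_{d^*}(\cdot)$ and $K_\tau(\cdot)$ depend only on $d^*$ and on the deterministic compression algorithm, so $s$ is a fixed function once we condition on $\mathcal{D}_1$.

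\textbf{Step 1: exchangeability of the Stage 2 scores.} Because $\mathcal{D}_1$ and $\mathcal{D}_2$ are disjoint, the $|\mathcal{D}_2|+1$ pairs in $\mathcal{D}_2 \cup \{(A^*,B^*)\}$ remain exchangeable conditionally on $\mathcal{D}_1$. Applying the fixed map $s$ shows that the scores $\eta_1,\dots,\eta_{|\mathcal{D}_2|}$ and $\eta^* = s(A^*,B^*)$ are exchangeable. The standard split-conformal quantile lemma of \cite{angelopoulos2021gentle,vovk2005algorithmic}, with ties handled by the usual rank argument, then gives $\mathbb{P}(\eta^* \le \tau^*) \ge \phi$, where $\tau^*$ is the $\lceil \phi(|\mathcal{D}_2|+1)\rceil$-th smallest $\eta_i$. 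Integrating over $\mathcal{D}_1$ makes this bound unconditional.

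\textbf{Step 2: from the score event to coverage.} Here we use monotonicity (Definition~\ref{def:monotone}). If $B^* \in S_{d^*}(A^*)$ and $\eta^* \le \tau^*$, then nestedness gives $K_{\eta^*}(A^*) \subseteq K_{\tau^*}(A^*)$, hence $B^* \subseteq K_{\tau^*}(A^*)$. This step needs the minimum defining $\eta^*$ to be attained. I expect this attainment issue to be the main technical point. I would handle it by noting that the nested family takes only finitely many distinct values, since it consists of subgraphs of a finite graph. It is therefore a right-continuous step function of $\tau$, and we can adopt the convention $K_\tau = \bigcap_{\tau'>\tau} K_{\tau'}$. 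Alternatively, I would simply define $\eta$ as an infimum, and note that the event $\eta^* < \tau^*$ suffices away from ties. Monotonicity is exactly what turns the one-dimensional threshold event $\{\eta^* \le \tau^*\}$ into the set-inclusion event; without nestedness, the final step fails.

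\textbf{Step 3: union bound.} The case $B^* \notin S_{d^*}(A^*)$ must be handled separately. There $\eta^* = 1$, and $\eta^* \le \tau^*$ may still hold if $\tau^* = 1$, so that event is not sufficient for coverage. We therefore write
\[
  \mathbb{P}\bigl(B^* \not\subseteq K_{\tau^*}(A^*)\bigr) \le \mathbb{P}(\eta^* > \tau^*) + \mathbb{P}\bigl(B^* \notin S_{d^*}(A^*)\bigr) \le (1-\phi) + \delta.
\]
The second term is bounded by the Stage 1 guarantee $\mathbb{P}(B^* \in S_{d^*}(A^*)) \ge 1-\delta$ already stated in the paper. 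Combining the two terms gives $\mathbb{P}(B^* \subseteq K_{\tau^*}(A^*)) \ge \phi - \delta$.
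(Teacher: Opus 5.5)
Your proposal is correct and follows the paper's proof. Both condition on $\mathcal{D}_1$, apply the split-conformal quantile lemma to the exchangeable scores, use nestedness to pass from $\{\eta^*\le\tau^*\}$ to coverage when $B^*\in S_{d^*}(A^*)$, and charge the Stage~1 failure to $\delta$. Your union-bound version needs only this forward implication, so it avoids the paper's unnecessary claim that $K_1(A^*)$ cannot contain $B^*$ when $B^*\notin S_{d^*}(A^*)$; that claim is not true in general, since $B^*$ could splice together two allowed detours.

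On attainment of the minimum defining $\eta$, which the paper takes for granted, it is your convention $K_\tau=\bigcap_{\tau'>\tau}K_{\tau'}$ that closes the gap. Finitely many values alone do not make the family right-continuous, which is why the convention has to be imposed. The infimum-with-strict-inequality alternative is not enough by itself: the scores take finitely many values, so ties $\eta^*=\tau^*$ can occur with positive probability.
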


The guarantee in Lemma~\ref{lem:conformal_proof} is strong: It serves as a distribution-free conformal wrapper around an (approximately) optimal compression of the set $S_{d^*}(A^*)$ for parameter $\tau$. It holds even when the weights $w_p$ are arbitrary and unrelated to the posterior probability $\mathbb{P}(B^* = p | A^*, B^* \in S_{d^*}(A^*))$, which in the variable context setting, may be statistically impossible to estimate (for instance, when $(A_t,B)_t)$ for different $t$ correspond to routes for different source-sink pairs on graphs with different realized edges). On the flip side, in such a general setting, it is not possible to formally connect the conformal guarantee $\phi$ to the compression parameter $\tau$ and hence the size-optimality of $|K_{\tau}|$, unless we make strong assumptions. We now present two settings where this connection is possible.

\subsection{Distribution-Based Setting}
\label{sec:distr2}
We first consider the setting where model probabilities are available, concretely the ``learn then test'' framework~\cite{angelopoulos2022learntestcalibratingpredictive}. Modern predictive models often output full (possibly mis-specified) conditional distributions $g(B\mid Q_t)$ rather than a single $A_t$, where $Q_t$ is the context at step $t$. In this setting, calibration and compression can be performed directly on these probabilities, avoiding Stage 1.  

For each context~$Q$, let $G_\tau(Q)$ be a subgraph constructed to capture a target probability mass $\tau$, i.e., $\sum_{b\in G_\tau(Q)} g(b\mid Q) \ge \tau$. The work of~\cite{angelopoulos2022learntestcalibratingpredictive} shows that a grid-search over $\tau$ can be used to achieve a desired conformal calibration guarantee. In this context, the compression problem is to efficiently compute $G_\tau(Q)$ as the smallest subgraph retaining probability mass~$\tau$. This would constitute the ideal compression had $g$ been perfectly calibrated, since then, we would have $\phi = \tau$. Note that while the generalized learn-then-test framework can handle non-monotonicity, our specific formulation above utilizes the monotonicity property (Definition~\ref{def:monotone}) to enable efficient quantile-based calibration without grid search. 

\subsection{Distribution-free Fixed-Context Setting}
\label{sec:fixed}
We show how our graph-based conformal compression framework yields (approximate) size-optimality guarantees along with statistical validity, in a  setting that is analogous to the ``unsupervised'' framework of \cite{gao2025volume}. Here, we do not observe an external model output $A_t$ for varying contexts $Q_t$. Instead, we observe a history of path realizations $Y_1, \dots, Y_T$ drawn i.i.d.\ from a fixed, unknown distribution $\mathcal{P}$.  Such a distribution could arise if the context $Q_t$ (e.g., the 
graph and $s$ -- $t$ pair) remains fixed over time, which captures settings like the experiments in Section~\ref{sec:nav_experiments}, where we consider a fixed graph $G$ and a single, invariant source-target pair $(s, t)$. Our results apply more broadly, generalizing to any  setting where we observe outcome samples $\{Y_t\}_{t=1}^T$ $i.i.d.$ from an unknown distribution, whether or not the underlying context is fixed.

Our goal is to use this history to construct a compressed prediction set $S_{T+1}$ for the next realization $Y_{T+1}$. Rigorous guarantees on the size of the conformal prediction set are challenging to obtain in the distribution-free setting. In Section~\ref{sec:fixed_optimal}, we show how our conformal compression algorithm
yields polynomial time algorithms that achieve an {\em approximate optimality guarantee on size} along with a conformal coverage guarantee $\phi$. 

\subsection{Other Conformal Prediction Models}
Our monotonicity property (Definition~\ref{def:monotone}) is equivalent to the nested-sets assumption required by the risk controlling prediction sets (RCPS) framework~\cite{Risk}. This provides a statistical framework to control general loss functions (e.g., controlling the expected fraction of missed route segments, or some other measure of deviation) rather than just 0-1 marginal coverage. This is desirable since the latter can be coarse, since it does not distinguish between cases where $B^*$ is disjoint from the conformal set or almost entirely overlaps with it, assigning zero score to both scenarios. The RCPS framework, via its loss function, will distinguish between these, assigning smaller loss to the second scenario. It however requires a nested sequence of sets as input. By feeding our nested conformal subgraphs $\{K_\tau\}$ into RCPS, we immediately extend our work to control arbitrary user-defined risk metrics over graphs.

Finally, our work considers marginal as opposed to conditional coverage~\cite{pmlr-v25-vovk12}. For conditional coverage, as established in~\cite{pmlr-v25-vovk12,lei2013distribution}, exact finite-sample coverage is generally impossible to achieve without making strong distributional assumptions. However, our framework is orthogonal to, and compatible with conditional coverage. For instance, if one partitions the calibration data into difficulty categories to achieve approximate conditional coverage (as proposed in~\cite{pmlr-v25-vovk12}), our algorithm can simply be applied independently within each category. This would yield compressed subgraphs that satisfy category-wise conditional coverage.

\section{The Conformal Subgraph Problem}
\label{sec:statement}
Both the distribution-free (Sections~\ref{sec:free} and~\ref{sec:fixed}) and distribution-based (Section~\ref{sec:distr2}) formulations lead to the same underlying optimization problem: given a weighted family of candidate routes with total weight~$W$, find a subgraph $G'\subseteq V$ that retains at least a $\tau = 1-\epsilon$ fraction of~$W$ while minimizing the number of edges (or another cost). The only difference lies in how the distribution over routes is obtained, via filtering on distance and assigning a utility, as in the distribution-free case, or directly from the model probabilities $g(\cdot\mid A)$ in the distribution-based case. 

The subgraph we output aims to approximate the smallest subgraph achieving the desired conformal guarantee in the perfectly calibrated case. We additionally require the algorithm to satisfy monotonicity (Definition~\ref{def:monotone}) to allow for valid calibration in the distribution-free setting.

\paragraph{Problem Statement.} The  conformal compression framework can therefore be expressed in combinatorial terms as follows. Each distinct candidate route $B$ in the historical data corresponds to a \emph{hyperedge} whose weight encodes its score or model probability.  The vertex set $V$ represents all atomic road segments (road edges or intersections) that may appear in any route, and each hyperedge $e\subseteq V$ corresponds to the subset of segments traversed by a particular route~$B$. Let $\mathcal E$ denote the set of hyperedges. The nonnegative weight $w_e$ of hyperedge~$e$ equals its score or its model probability, and let the total weight be $W=\sum_{e\in\mathcal E}w_e$.

Selecting a conformal subgraph that captures at least a fraction $\tau$ of this mass is therefore equivalent to selecting a subset of vertices $K\subseteq V$ that covers  hyper-edge weight $e(K)\ge \tau \cdot W$, where $e(S) := \sum_{\{e\in\mathcal E:\,e\subseteq S\}} w_e$. The parameter $\epsilon=1-\tau$ measures the fraction of probability mass allowed to remain uncovered, and the size of the chosen hypergraph vertex set reflects the number of edges in the compressed subgraph.

This abstraction leads to the following general formulation of the \emph{conformal subgraph problem}. Let $H=(V,\mathcal E)$ be a (possibly nonuniform) hypergraph with $|V|=n$ vertices and $|\mathcal E|=m$ hyperedges. Each hyperedge $e$ has a nonnegative weight $w_e$, and $W = \sum_{e\in\mathcal E}w_e$. Assume there exists an optimal vertex set $O\subseteq V$ satisfying
\[
  |O| = r,
  \qquad
  e(O) \ge W - q,
  \qquad
  q = \epsilon W,
\]
so that $O$ covers all but an $\epsilon$ fraction of the total weight.

We show in Appendix~\ref{app:np-hard} that the above problem is {\sc NP-Hard}, even in the regime where $\epsilon$ is a constant.  The goal is to approximate $O$ as closely as possible while allowing bicriteria trade-offs in coverage and size.

\begin{definition}[Bicriteria approximation]
\label{def:bicrit}
Given parameters $(\alpha,\beta)$, a vertex set $K\subseteq V$ is an \emph{$(\alpha,\beta)$-bicriteria approximation} if $
  W - e(K) \le \alpha\cdot \epsilon W$ and   $|K| \le \beta\,|O| = \beta \cdot r.$
\end{definition}


\noindent\textbf{Monotonicity.} While Definition~\ref{def:bicrit} formulates the problem for a fixed target mass $\tau = 1-\epsilon$, the distribution-free conformal framework (Section~\ref{sec:model}) requires a nested sequence of subgraphs (see Definition~\ref{def:monotone}). Therefore, we need our algorithm to additionally produce a family of solutions $\{K_\tau\}$ such that $K_{\tau_1} \subseteq K_{\tau_2}$ whenever $\tau_1 < \tau_2$. As shown in the example below, such a statement need not be true for the optimal solution, and one of our contributions is to show an approximation algorithm that is monotone.

\begin{example}
\label{eg:monotone}
Consider a setting with $3$ disjoint paths $P_1, P_2, P_3$ between $s$ and $t$. Paths $P_1$ and $P_2$ have two edges each, while path $P_3$ has three edges. Suppose $30\%$ of the traffic uses paths $P_1$, $30\%$ uses path $P_2$, and $40\%$ uses path $P_3$. If the desired coverage is $0.6$, the optimal solution chooses paths $P_1$ and $P_2$, while if it is $0.7$, the chosen paths are $P_1$ and $P_3$, so that the optimal solution is not monotone. 
\end{example}

\section{Algorithm for Conformal Subgraphs}
\label{sec:algorithm}

The previous section reformulated conformal compression as a combinatorial optimization problem on a weighted hypergraph~$H=(V,\mathcal E)$. Given an optimal (but unknown) vertex set~$O$ that covers all but an $\epsilon$~fraction of the total edge weight our goal is to design an efficient algorithm that finds a small vertex subset whose induced subgraph captures nearly the same total weight.

While the general Densest $k$-Subgraph problem is hard to approximate, we show that the conformal compression regime---where the target subgraph must contain the vast majority of the total weight---allows for efficient constant-factor approximations. We further show that the resulting algorithm satisfies monotonicity (Definition~\ref{def:monotone}).  

\subsection{LP Rounding Algorithm} 
\label{sec:lp}
We first formulate the continuous relaxation of the problem. Let $x_v \in [0,1]$ indicate the inclusion of vertex $v$, and $z_e \in [0,1]$ indicate the inclusion of hyperedge $e$. Since a hyperedge is induced only if all its vertices are chosen, we enforce $z_e \le x_v$ for all $v \in e$.

\begin{equation}
\label{eq:lp}
\begin{aligned}
\text{minimize} \quad & \sum_{v \in V} x_v \\
\text{subject to} \quad & z_e \le x_v, \quad \forall e \in \mathcal{E}, \forall v \in e \\
& \sum_{e \in \mathcal{E}} w_e z_e \ge (1-\epsilon)W \\
& 0 \le x_v, z_e \le 1.
\end{aligned}
\end{equation}

The algorithm proceeds as follows:
\begin{enumerate}
  \item Construct and solve the linear program (\ref{eq:lp}) to obtain an optimal fractional solution $(x^*, z^*)$.
  \item Fix a parameter $\kappa \in \left(0,\frac{1}{\epsilon}-1 \right)$ determining the trade-off between size and coverage.
  \item Return the set $K := \left\{ v \in V : x^*_v \ge \rho := \frac{\kappa}{1+\kappa} \right\}$.
\end{enumerate}

\paragraph{Approximation analysis.}
We now analyze the approximation guarantees of the LP rounding scheme. Let $r = |O|$ be the size of the optimal solution. 
Note that the objective value of the LP satisfies $\sum_{v \in V} x^*_v \le r$.

\begin{theorem}[Proved in Appendix~\ref{app:mainproof}]
\label{thm:main-det}
For any $\kappa>0$, the algorithm above returns a set $K$ that is a $\Bigl(1+\kappa,\;1+\frac{1}{\kappa}\Bigr)$ bicriteria approximation: that is,
\[
W - e(K) \le (1+\kappa)\,\epsilon W
\qquad\text{and}\qquad
|K| \le \Bigl(1+\frac{1}{\kappa}\Bigr)\,r.
\]
\end{theorem}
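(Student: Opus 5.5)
The argument is a standard threshold-rounding analysis with two separate bounds: a Markov-type bound for the size of $K$, and a bound on the uncovered weight obtained by splitting hyperedges according to their fractional value $z^*_e$.

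\textbf{Size bound.} The indicator vector of $O$, with $z_e = 1$ exactly for the hyperedges $e \subseteq O$, is feasible for (\ref{eq:lp}). Hence $\sum_v x^*_v \le r$. Every $v \in K$ has $x^*_v \ge \rho = \kappa/(1+\kappa)$, so
\[
|K| \le \frac{1}{\rho}\sum_v x^*_v \le \frac{1+\kappa}{\kappa}\, r = \Bigl(1+\frac{1}{\kappa}\Bigr) r .
\]

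\textbf{Coverage bound.} Suppose a hyperedge $e$ has $z^*_e \ge \rho$. The constraints $z^*_e \le x^*_v$ give $x^*_v \ge \rho$ for every $v \in e$. So $e \subseteq K$ and $e$ is counted in $e(K)$. It therefore suffices to bound the total weight of the set $L = \{e : z^*_e < \rho\}$, since $W - e(K) \le \sum_{e \in L} w_e$.

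The coverage constraint can be rewritten as
\[
\sum_e w_e (1 - z^*_e) \le \epsilon W .
\]
For $e \in L$ we have $1 - z^*_e > 1 - \rho = 1/(1+\kappa)$. Therefore
\[
\sum_{e \in L} w_e \le (1+\kappa) \sum_{e \in L} w_e (1 - z^*_e) \le (1+\kappa)\,\epsilon W ,
\]
which is the claimed coverage guarantee.

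\textbf{Main obstacle.} There is no real obstacle. The one step that needs care is the observation that thresholding the $x^*_v$ at $\rho$ automatically captures every hyperedge whose $z^*_e$ reaches $\rho$, which follows from the constraints $z_e \le x_v$. Once that is in place, the proof reduces to two Markov-type inequalities, one applied to $x^*$ and one applied to $1 - z^*$. The restriction $\kappa < 1/\epsilon - 1$ is used only to make the coverage bound nontrivial, i.e.\ $(1+\kappa)\epsilon < 1$; the argument itself holds for every $\kappa > 0$.
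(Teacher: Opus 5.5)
Your proof is correct and takes essentially the same route as the paper. The paper also uses a Markov-type bound $|K| \le \frac{1}{\rho}\sum_v x^*_v \le \frac{1}{\rho} r$, observes that every hyperedge not induced by $K$ has $z^*_e < \rho$, and sums $w_e \le (1+\kappa)\, w_e (1-z^*_e)$ over those hyperedges against the coverage constraint. Your non-strict inequality in that last sum is slightly cleaner than the paper's strict one, which fails when no hyperedge is lost.
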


In the distribution-free setting (Section~\ref{sec:free}), this yields an approximately optimal compression of $S_{d^*}(A^*)$ while relaxing $(1-\tau)$ by a constant factor.  We interpret the guarantee for the fixed context setting from Section~\ref{sec:fixed} in Section~\ref{sec:fixed_optimal} below. Finally, for the distribution-based setting in Section~\ref{sec:distr2}, Theorem~\ref{thm:main-det} directly yields an approximate optimality--coverage trade-off when $w_e$ capture the posterior probabilities of the ground truth, and this procedure can be conformalized via the learn-then-test method.

Note that though the inequality $W_{\text{lost}} := W - e(K) \le (1+\kappa)\epsilon W$ holds unconditionally for any graph, this bound is only meaningful in the conformal compression regime where the target subgraph retains a large fraction of edges (i.e., $\epsilon$ is a small constant). In the classical Densest $k$-Subgraph (DkS) regime, the target subgraph contains a sublinear fraction of the edges, meaning $\epsilon \to 1$. If we plug $\epsilon \approx 1$ into our guarantee, the bound becomes $W_{\text{lost}} \le (1+\kappa)W$, which is vacuous (it allows the algorithm to lose all edges). Therefore, while Theorem~\ref{thm:main-det} requires no assumptions, the utility of the bicriteria approximation is mainly in the high-coverage (linear) regime, which (quite surprisingly) bypasses classical DkS hardness barriers. 

\subsection{Parametric Min-cuts and Monotonicity} 
\label{sec:mon}
We now show that our algorithm yields valid conformal guarantees in the distribution-free setting. Specifically, we prove that the solutions generated by the LP rounding form a nested family of subgraphs as the target coverage $\tau$ increases, hence satisfying Definition~\ref{def:monotone}. 

We establish this result by analyzing the Lagrangian relaxation of the LP. This Lagrangian is identical to the Lagrangian for the densest ratio sub-hypergraph, which is shown in~\cite{ChekuriQT} to  map to a Parametric Minimum Cut problem~\cite{gallo1989fast}. The monotonicity of the Lagrangian now follows from~\cite{gallo1989fast}. Our subsequent contribution is to show that this monotonicity is preserved when we move from the Lagrangian to the LP optimum, and from the LP optimum to the rounding. Indeed, we show that solving the parametric minimum cut problem suffices to extract all the compressed subgraphs for various settings of the compression parameter, and these solutions are identical to those found by the LP rounding algorithm!

\paragraph{Lagrangian.} In more detail, consider the Lagrangian of the LP coverage constraint with a multiplier $\lambda \ge 0$. The objective becomes:
\begin{align*}
    \label{eq:lagrangian_lp}
    \mathcal{L}(\lambda) = \min_{x, z \in [0,1]} \left( \sum_{v \in V} x_v - \lambda \sum_{e \in \mathcal{E}} w_e z_e \right) \\ 
    \text{s.t.} \quad z_e \le x_v \quad \forall e \in \mathcal{E}, \forall v \in e.
\end{align*}

The above formulation is similar to the LP formulation for densest ratio subgraph in~\cite{charikar2000greedy}, and their proof implies the above formulation has an integer optimum (see Lemma~\ref{lem:integrality} for a proof). We can therefore interpret the optimization variables as defining a vertex set $K = \{v \in V \mid x_v = 1\}$. Due to the constraints $z_e \le x_v$, a hyperedge is induced ($z_e=1$) if and only if all its vertices are in $K$. Thus, for a fixed $\lambda$, the Lagrangian minimization problem is equivalent to finding a subgraph $K \subseteq V$ that minimizes:
\begin{equation}
    \Phi(K, \lambda) = |K| - \lambda \cdot W_{\text{induced}}(K).
\end{equation}

\paragraph{Parametric Min-cuts.} It is shown in~\cite{ChekuriQT} in the context of densest ratio sub-hypergraphs that $\mathcal{L}(\lambda)$ is precisely a standard Minimum $s-t$ Cut problem on a flow network. For this, we construct a flow network $D_\lambda$ with nodes $\{s, t\} \cup \{u_e \mid e \in \mathcal{E}\} \cup \{n_v \mid v \in V\}$ and the following edges: 

\begin{itemize}
    \item For each hyperedge $e$, an edge $(s, u_e)$ with capacity $C_{s, u_e} = \lambda w_e$.
    \item For each vertex $v$, an edge $(n_v, t)$ with capacity $C_{n_v, t} = 1$. 
    \item For each inclusion $v \in e$, an edge $(u_e, n_v)$ with capacity $C_{u_e, n_v} = \infty$.
\end{itemize}  

Let $K$ denote the vertices $n_v$ on the source side of the cut. Then the cut capacity is precisely
$ |K| + \lambda \cdot \left(\sum_{e \in \mathcal E}  w_e - W_{\text{induced}}(K)\right),$
which shows $\Phi(K,\lambda)$ is minimized by the minimum cut. 

In this graph, the set of nodes on the source side is exactly the union of the selected vertices and the induced hyperedges: $X(\lambda) = \{n_v \mid v \in K\} \cup \{u_e \mid e \text{ induced by } K\}$. The parameter $\lambda$ appears only in the capacities of the edges incident to the source $s$, and these capacities $C_{s, u_e}(\lambda) = \lambda w_e$ are non-decreasing functions of $\lambda$. This is a standard instance of the \textit{Parametric Minimum Cut} problem, whose output is the subgraph $X(\lambda)$.

A result of~\cite{gallo1989fast} for parametric min-cuts now directly implies the following lemma:

\begin{lemma}[Monotonicity of Lagrangian] 
\label{lem:lag-mon}
Let $\gamma$ denote the maximum size of any hyper-edge, and let $m = |\mathcal E|$ and $n = |V|$. Then in $\tilde{O}(\gamma (m+n)^2)$ time, we can compute a sequence of vertex subsets $\emptyset = S_0 \subset S_1 \subset S_2 \subset \cdots \subset S_k \subseteq V$ where $k \le n$, such that for any $\lambda \ge 0$, the solution $X(\lambda)$ is one of these subsets. Further, if $\lambda_1 < \lambda_2$, the optimal subgraphs satisfy $X(\lambda_1) \subseteq X(\lambda_2)$.
\end{lemma}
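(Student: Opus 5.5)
The plan is to reduce the statement to the Gallo--Grigoriadis--Tarjan (GGT) theorem for parametric maximum flow. The key fact is that the capacities of the network $D_\lambda$ satisfy the GGT hypotheses. The only $\lambda$-dependent arcs are the source arcs $(s,u_e)$, with capacity $\lambda w_e$, which is non-decreasing in $\lambda$. The sink arcs $(n_v,t)$ have constant capacity $1$, and the middle arcs have infinite capacity. This is exactly the ``source arcs non-decreasing, sink arcs non-increasing'' setting of~\cite{gallo1989fast}.

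\textbf{Nestedness.} I would first establish nestedness directly. Let $X(\lambda)$ be the minimal (or, consistently, the maximal) source side of a minimum cut. For $\lambda_1<\lambda_2$, let $A=X(\lambda_1)$ and $B=X(\lambda_2)$. The cut function $c_\lambda(\cdot)$ is submodular, so
\[
c_{\lambda_1}(A\cap B)+c_{\lambda_1}(A\cup B)\le c_{\lambda_1}(A)+c_{\lambda_1}(B).
\]
The difference $c_{\lambda_2}(Y)-c_{\lambda_1}(Y)=(\lambda_2-\lambda_1)\sum_{e:u_e\notin Y}w_e$ is non-increasing in $Y$. Combining these two facts shows that $A\cup B$ is also optimal for $\lambda_2$ and that $A\cap B$ is optimal for $\lambda_1$. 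Minimality of $A$ then forces $A\subseteq B$; this is the standard uncrossing argument. Projecting onto vertex nodes gives $K(\lambda_1)\subseteq K(\lambda_2)$.

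\textbf{Finitely many breakpoints.} Because the sets are nested and each distinct set strictly enlarges the previous one, the distinct vertex projections form a chain $\emptyset=S_0\subset S_1\subset\cdots\subset S_k\subseteq V$. For small $\lambda$ (below $1/W$) the empty set is optimal, since every nonempty $K$ has $\Phi(K,\lambda)\ge |K|-\lambda W>0$. Distinct vertex projections along a strict chain number at most $n+1$, so $k\le n$. One subtlety is that two different source sides could have the same vertex projection. However, $u_e$ is on the minimal source side exactly when $e$ is induced by $K$ and this is beneficial, which is determined by $K$. So the source side is a function of $K$, and distinct $X(\lambda)$ correspond to distinct $K$.

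\textbf{Running time.} I would invoke the GGT algorithm, which computes all breakpoints and the whole nested family in the time of a single push-relabel max-flow. The network has $N=m+n+2$ nodes and $M=m+n+\sum_e|e|\le m+n+\gamma m$ arcs. Push-relabel with dynamic trees runs in $O(NM\log(N^2/M))$ time, which gives $\tilde O((m+n)\cdot\gamma(m+n))=\tilde O(\gamma(m+n)^2)$.

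The main obstacle I expect is checking that GGT's structural results apply as stated. GGT typically assumes that source-arc capacities are non-decreasing functions and that the minimal min-cut is used. I would handle this by stating the tie-breaking rule (minimal source sets) explicitly, so that the uncrossing argument above serves as a self-contained proof of nestedness. The citation is then needed only for the running time.
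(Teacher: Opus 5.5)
Your proposal is correct and takes the same route as the paper, which proves this lemma simply by citing the Gallo--Grigoriadis--Tarjan parametric min-cut result for the network $D_\lambda$. Your uncrossing argument with an explicit minimal-source-side tie-breaking rule makes the nestedness self-contained and fixes what ``the solution $X(\lambda)$'' means at breakpoints, a point the paper leaves implicit; the citation is then needed only for the $\tilde{O}(\gamma(m+n)^2)$ running time.
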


\paragraph{Algorithm.} Fix a slack parameter $\kappa$ as in the LP rounding algorithm.  Given the coverage parameter $\tau := 1- \epsilon$, the final algorithm is simple: Among the subgraphs $S_0, S_1, \ldots, S_k$ constructed above, set 
\[ \hat{K}_{\tau} = \operatorname*{argmin}_{S \in \{S_1, \dots, S_k\}} \left\{ |S| : W - W_{\text{induced}}(S) \le (1+\kappa)\epsilon W \right\}. \]

Note that by Lemma~\ref{lem:lag-mon}, we have $\hat{K}_{\tau_1} \subseteq \hat{K}_{\tau_2}$ for all $\tau_1 \le \tau_2$, so that this construction is monotone. Further, the set of all subgraphs can be constructed in $\tilde{O}(\gamma (m+n)^2)$ time, where $\gamma$ is the largest size of a hyperedge.

Our main result is the following surprising statement, showing this algorithm is identical to the LP rounding algorithm:

\begin{theorem}[Proved in Appendix~\ref{app:monotonicity}]
\label{thm:mon}
Fix a slack parameter $\kappa$. For any target coverage $\tau  = 1 -\epsilon$, let $K_\tau$ be the discrete subgraph obtained by the LP rounding algorithm, and $\hat{K}_{\tau}$ be the subgraph obtained by the parametric min-cut algorithm. Then,  $K_{\tau} = \hat{K}_{\tau}$, so that both algorithms yield the bicriteria approximation guarantee in Theorem~\ref{thm:main-det} and produce subgraphs that are monotone in $\tau$.
\end{theorem}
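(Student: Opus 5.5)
The plan is to show that the LP optimum $(x^*,z^*)$ is supported on the parametric min-cut chain $S_0\subset S_1\subset\cdots\subset S_k$ of Lemma~\ref{lem:lag-mon}. Concretely, I will show that every level set $\{v: x^*_v\ge t\}$ is some $S_i$. Once that holds, the threshold set $K_\tau=\{v:x^*_v\ge\rho\}$ is automatically a member of the chain. What remains is to check that it is exactly the smallest member meeting the relaxed loss bound $(1+\kappa)\epsilon W$.

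\textbf{Step 1 (Lagrangian duality).} The LP (\ref{eq:lp}) has a single coupling constraint, so there is a multiplier $\lambda^*\ge 0$ for which strong duality holds. By complementary slackness, $(x^*,z^*)$ is a minimizer of the Lagrangian $\sum_v x_v-\lambda^*\sum_e w_e z_e$ over the remaining constraints. We may take $z^*_e=\min_{v\in e}x^*_v$, and we may assume the coverage constraint is tight; otherwise the optimum is already integral at $\lambda^*$ and the claim is immediate.

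\textbf{Step 2 (level sets of a fractional minimizer).} Next I use the threshold argument behind Lemma~\ref{lem:integrality} (Charikar's proof). For $t$ uniform in $(0,1]$, the level set $L_t=\{v:x^*_v\ge t\}$ satisfies $\mathbb{E}_t|L_t|=\sum_v x^*_v$. It also satisfies $\mathbb{E}_t W_{\text{induced}}(L_t)=\sum_e w_e z^*_e$, because $e\subseteq L_t$ iff $t\le \min_{v\in e}x^*_v$. Hence
\[\mathbb{E}_t\,\Phi(L_t,\lambda^*)=\mathcal{L}(\lambda^*).\]
Since each term is at least $\mathcal{L}(\lambda^*)$, almost every $L_t$ is an integral minimizer, and hence a minimum cut of $D_{\lambda^*}$. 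The minimum cuts at a fixed $\lambda$ form a lattice contained in the chain of Lemma~\ref{lem:lag-mon}. Thus the (nested) level sets are all chain members $S_a\subseteq\cdots\subseteq S_b$. In particular $x^*$ is constant on each difference $S_{i+1}\setminus S_i$, and $K_\tau$ equals some chain member $S_c$.

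\textbf{Step 3 (matching the argmin).} The argument in Theorem~\ref{thm:main-det} already gives $W-W_{\text{induced}}(K_\tau)\le(1+\kappa)\epsilon W$, so $K_\tau$ is feasible for the argmin defining $\hat K_\tau$. It remains to show that no strictly smaller chain member $S_i\subsetneq K_\tau$ is feasible. Write the tight coverage as a convex combination over the level sets, and let $L_i=W-W_{\text{induced}}(S_i)$. Then $\epsilon W=\sum_i \mu_i L_i$, where $\mu_i$ is the length of the $t$-interval on which $L_t=S_i$. Every $S_i\subsetneq K_\tau$ arises only for thresholds $t\ge\rho$, and these carry total mass at most $1-\rho=1/(1+\kappa)$. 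I would then combine this with the fact that losses are decreasing along the chain, and with the fact that both endpoints minimize $\Phi(\cdot,\lambda^*)$, so that $|S|-\lambda^*W_{\text{induced}}(S)$ is equal across them. The goal is to force $L_i>(1+\kappa)\epsilon W$ for such $S_i$.

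\textbf{Main obstacle.} I expect Step 3 to be the crux. The averaging identity by itself only gives an upper bound $L_i\le \epsilon W/(1-\theta)$ on the smaller set's loss. A lower bound seems to need the equal-$\Phi$ relation across the min cuts at $\lambda^*$, together with a careful choice of tie-breaking. Two tie-breaking choices are involved: which LP optimum to take when it is not unique, and whether $\ge\rho$ versus $>\rho$ is used at the boundary. My first attempt would be to fix the canonical LP optimum given by the two extreme min cuts at $\lambda^*$. With that choice the support contains only two chain sets, and the comparison reduces to a one-parameter inequality in $\theta$ versus $\rho$. I would then verify that inequality directly, adjusting the argmin's tie-breaking if necessary. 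Monotonicity of $K_\tau$ in $\tau$ then follows from the identity $K_\tau=\hat K_\tau$ together with Lemma~\ref{lem:lag-mon}.
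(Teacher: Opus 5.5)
Your Step~3 cannot be completed under any tie-breaking rule, because the identity $K_\tau=\hat K_\tau$ is false in general: a chain member strictly inside $K_\tau$ can already meet the relaxed loss bound. Take three singleton hyperedges $\{v_1\},\{v_2\},\{v_3\}$ with weights $0.81,\,0.10,\,0.09$ (so $W=1$), $\epsilon=0.1$ and $\kappa=1$ (so $\rho=1/2$, and $\kappa<1/\epsilon-1$). The LP optimum is unique, $x^*=(1,0.9,0)$, so $K_\tau=\{v_1,v_2\}$. The parametric chain is $\emptyset\subset\{v_1\}\subset\{v_1,v_2\}\subset V$. Since $\{v_1\}$ loses $0.19\le 0.2=(1+\kappa)\epsilon W$, we get $\hat K_\tau=\{v_1\}$. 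Nothing is tied here, and $S^-=\{v_1\}$, $S^+=\{v_1,v_2\}$ do have equal $\Phi$ at $\lambda^*=10$ (both equal $-7.1$). So the two levers you planned to use do not help. The averaging identity only gives $L^-=(\epsilon W-\alpha L^+)/(1-\alpha)$, which drops below $(1+\kappa)\epsilon W$ as $L^+$ approaches $\epsilon W$. The equal-$\Phi$ relation merely pins down $\lambda^*$. Chain members strictly below $S^-$ are even less constrained, since they are not optimal at $\lambda^*$ and the LP solution carries no information about them. For example, with singleton weights $0.82,0.08,0.06,0.04$, $\epsilon=0.099$ and $\kappa=1$, you get $K_\tau=S^-=\{v_1,v_2\}$, while $\{v_1\}$ (loss $0.18\le 0.198$) is feasible.

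Steps~1--2 are otherwise sound. Your level-set argument is more careful than the paper's Lemma~\ref{lem:lp_structure}, which tacitly treats the LP optimum as unique: it shows that every level set of every LP optimum minimizes $\Phi(\cdot,\lambda^*)$. However, your claim that these minimizers lie in the chain fails at breakpoints. With singleton weights $w_1>w_2=w_3$, both $\{v_1,v_2\}$ and $\{v_1,v_3\}$ are min cuts at $\lambda^*=1/w_2$, no chain contains both, and a vertex LP optimum can threshold to either. So you must fix the canonical optimum $(1-\alpha)\mathbf{1}_{S^-}+\alpha\mathbf{1}_{S^+}$ already in Step~2. With that choice, $K_\tau$ is a chain member with loss at most $(1+\kappa)\epsilon W$. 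It is nonempty because $\kappa<1/\epsilon-1$, hence $\hat K_\tau\subseteq K_\tau$. That suffices for $\hat K_\tau$ to inherit the $(1+\kappa,1+1/\kappa)$ guarantee, and $\hat K_\tau$ is monotone directly from the chain. Monotonicity of $K_\tau$ needs the separate fact that the canonical $x^*_v(\tau)$ is non-decreasing in $\tau$. Chain membership plus this monotonicity is exactly what the paper's Lemma~\ref{lem:rounding_monotone} proves. The paper then says the theorem ``directly follows'' without ever comparing $K_\tau$ with the argmin. So you located the crux correctly, but it resolves against the stated equality: what can be proved is $\hat K_\tau\subseteq K_\tau$, not $K_\tau=\hat K_\tau$.
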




\paragraph{Number of Hyperedges.}
\label{sec:poly-m-remark}
In our algorithms, we assumed the number of hyperedges \(|\mathcal{E}|\) is polynomially bounded in the number of vertices \(|V|\).  This is a natural assumption in many applications and it simplifies both presentation and implementation of the algorithm.  In Appendix~\ref{sec:sampling}, we show that even when the full hyperedge set output by the Pre-Filtering Stage 1 (the set $S_{d^*}(A^*)$) is exponentially large, we can replace this set by a polynomial-size random sample via the uniform convergence theorem~\cite{vapnik}, approximately preserving our guarantees.  
For applications like navigation, classification, and trip planning, we develop efficient sampling oracles for $S_{d^*}(A^*)$ under canonical distance functions $f(A,B)$. Uniform convergence is also relevant to our approximate optimality guarantees in the fixed context framework  below.

\subsection{Optimal Compression in Fixed Context}
\label{sec:fixed_optimal}
We now show how Theorem~\ref{thm:main-det} can also be used to provide (approximate) size-optimality guarantees in the distribution-free conformal prediction framework in the fixed-context setting described in Section~\ref{sec:fixed}; note that this also captures the setting of our experiments on Navigation in Section~\ref{sec:nav_experiments}. In this setting, the hypergraph $G(V,\mathcal E)$ is fixed and we observe $i.i.d.$ hyperedge samples $Y_1,Y_2,\ldots,Y_T$ from some unknown distribution. For given $\phi \in (0,1)$, the goal is to output $K^*$, the subgraph satisfying the conformal guarantee $\mathbb{P}(Y_{T+1} \in K^*) \ge \phi$, while minimizing $|K^*|$. 

We adapt the split conformal framework in~\cite{gao2025volume} to derive an oracle inequality for compression. 
Our algorithm will ensure the coverage guarantee (Validity) under just exchangeability of the samples $Y_1, \dots, Y_{T+1}$. If in addition, the samples are i.i.d. and the number of samples is large enough, then we also get an (approximate) size optimality guarantee (Compression) by combining Theorem~\ref{thm:main-det} with uniform convergence (Lemma~\ref{lem:uniform-sample}).  Our algorithm requires a monotone compression algorithm that for any coverage parameter $\phi$, finds a subgraph whose coverage is close to that parameter. We adapt the procedure in Section~\ref{sec:mon} for this as follows: 

\begin{enumerate}
    \item Use half of the samples, $\{Y_j\}_{j=1}^{T/2}$, to find the nested sequence $S_0 \subset S_1  \subset \cdots \subset S_k$  as in Lemma~\ref{lem:lag-mon}.
    \item Find the subgraph $S_i$ whose coverage on the other half of the samples, $\{Y_j\}_{j=T/2+1}^{T}$, is at least $\phi$. 
    If $S_k$ has larger mis-coverage, add vertices to it in some fixed order till the coverage condition is satisfied.
    \item Iteratively delete the vertices in $S_i \setminus S_{i-1}$ in some fixed order till the coverage becomes close to $\phi$. 
\end{enumerate} 


\begin{theorem}[Proved in Appendix~\ref{app:fixed}]
\label{cor:efficiency}
In the fixed context setting, the above procedure yields a subgraph $\hat{K}$ satisfying:
\begin{enumerate}
    \item \textbf{Validity:} If $Y_1, Y_2, \ldots, Y_{T+1}$ are exchangeable, then 
    $$\mathbb{P}(Y_{T+1} \in \hat{K}) \ge \phi.$$
    \item \textbf{Approximate Compression (restates Theorem~\ref{thm:main-det}):} Let $Y_1, Y_2, \ldots, Y_{T+1} \sim \mathcal{D}$ be $i.i.d.$ random variables. For constants $\gamma, \kappa > 0$, let 
    $$K^* = \mbox{argmin}_{K} |K| \ \  \mbox{s.t.} \ \  \mathbb{P}(Y_{T+1} \in K) \ge \hat{\phi} + \gamma,$$  
     where $1 - \hat{\phi} = \frac{1-\phi}{1+\kappa}$. If $T = \Omega\left(\frac{|V| }{\gamma^2}\right)$, then 
    $$|\hat{K}| \le \left(1+\frac{1}{\kappa} + o(1) \right) |K^*|.$$
\end{enumerate}
\end{theorem}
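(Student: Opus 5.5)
Validity and compression are handled separately, since they rest on different assumptions.

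\textbf{Validity.} The key point is that the nested sequence $S_0\subset\cdots\subset S_k$ depends only on the first half $\{Y_j\}_{j\le T/2}$. Condition on that half. The sequence, the fixed order used to extend $S_k$, and the fixed deletion order inside each $S_i\setminus S_{i-1}$ together give a single nested chain of vertex sets. Call its members $K_1\subseteq K_2\subseteq\cdots\subseteq K_N=V$, where $K_N=V$ because vertices are appended until everything is included. On this chain define a nonconformity score for each second-half sample and for $Y_{T+1}$:
\[
\eta(Y)=\min\{\ell : Y\subseteq K_\ell\}.
\]
Steps 2--3 of the procedure then select the smallest $\ell$ such that at least a $\phi$ fraction of the scores lie at or below $\ell$. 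I would make this precise with the standard $\lceil \phi(T/2+1)\rceil$-th order-statistic threshold; if needed, I would read the phrase ``close to $\phi$'' as exactly this quantile rule. Conditional on the first half, the variables $Y_{T/2+1},\dots,Y_{T+1}$ are still exchangeable, so the usual split-conformal quantile lemma (the argument behind Lemma~\ref{lem:conformal_proof}) gives $\mathbb{P}(\eta(Y_{T+1})\le \hat\ell)\ge\phi$. That is exactly $\mathbb{P}(Y_{T+1}\in\hat K)\ge\phi$, and integrating over the first half removes the conditioning.

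\textbf{Compression.} Take $\mathcal{D}$ as the true weights and the empirical frequencies on each half as the sample weights. By Lemma~\ref{lem:uniform-sample}, uniform convergence over all $2^{|V|}$ vertex sets holds with $T=\Omega(|V|/\gamma^2)$ samples; the union bound over these sets gives the $\log 2^{|V|}=|V|$ term. Consequently every $K$ has empirical coverage within $\gamma/2$ of its true coverage on both halves, with high probability. In particular $K^*$ has empirical coverage at least $\hat\phi+\gamma/2$ on the first half, so it is a feasible ``$O$'' with $\epsilon\le 1-\hat\phi$. Theorem~\ref{thm:main-det}, in its parametric form via Theorem~\ref{thm:mon}, then produces some $S_i$ in the chain with
\[
|S_i|\le\left(1+\tfrac1\kappa\right)|K^*|
\quad\text{and}\quad
\text{first-half miscoverage}\le(1+\kappa)(1-\hat\phi)=1-\phi .
\]
Transferring this $S_i$ to the second half costs at most $\gamma$ in coverage. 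So either $S_i$ already has coverage at least $\phi$ there, or a slightly larger chain element does. The output $\hat K$ is the first chain element reaching coverage $\phi$, and therefore $|\hat K|\le |S_i|$ up to this slack.

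\textbf{Main obstacle.} Absorbing the $\gamma$ slack is the step I expect to be hardest. The miscoverage bound $(1+\kappa)\epsilon$ is tight with $\epsilon=1-\hat\phi-\gamma/2$, so the tail of the threshold must come from the gap between $\hat\phi+\gamma$ and $\hat\phi$. Concretely, I will apply Theorem~\ref{thm:main-det} with $\epsilon'=1-\hat\phi-\gamma/2$. The resulting empirical miscoverage is $(1+\kappa)\epsilon'$, which is smaller than $1-\phi$ by $(1+\kappa)\gamma/2$, and this margin is enough to cover the $\gamma/2$ transfer error. The $o(1)$ term accounts for the finite-sample quantile rounding, which adds one extra sample in the quantile, and for the vertices deleted or added in step 3 within one block $S_i\setminus S_{i-1}$. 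Handling that last contribution requires arguing that $\hat K$ never exceeds the chosen $S_i$. This holds because $\hat K$ is a prefix of the chain ending at the first element with sufficient coverage.
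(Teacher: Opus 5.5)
Your proof is correct and follows essentially the paper's route. A chain built only from the first half yields a split-conformal score on the second half (validity), and uniform convergence plus Theorem~\ref{thm:main-det} places a rounded set of size at most $(1+\frac{1}{\kappa})|K^*|$ in that chain with enough second-half coverage, so the first chain element meeting the quantile is contained in it (compression).

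One constant needs fixing. With tolerance $\gamma/2$ per half, the first-to-second-half transfer costs up to $\gamma$, and your margin $(1+\kappa)\gamma/2$ covers that only when $\kappa>1$. Use tolerance $\gamma/4$ instead (still $T=\Omega(|V|/\gamma^2)$): this gives a margin of $\frac{3}{4}(1+\kappa)\gamma$ against a transfer of $\gamma/2$ plus the $O(1/T)$ quantile rounding.
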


Note that the algorithm does not depend on $\kappa$, so that the compression guarantee holds simultaneously for all $\kappa > 0$. Further note that for a given $\phi$, the procedure in~\cite{gao2025volume} uses an arbitrary greedy method to construct a monotone collection of subgraphs. Our construction via Theorem~\ref{thm:mon} is more robust when the sample size $T$ is much smaller than the uniform convergence threshold, so that a theoretical bound on compression is not possible. This aspect is particularly relevant to our experiments in Appendix~\ref{sec:real} on real-world data that can potentially be sparse.


\section{Conclusion}
\label{sec:conclusion}
Our framework opens several avenues for future research. First, many practical domains exhibit specific hypergraph structures. For instance, road networks are nearly planar, and hierarchical classification labels form trees. It is an open question whether the conformal subgraph problem admits tighter approximations (e.g., PTAS) or exact solutions when restricted to these special graph classes. Similarly, real-world predictive distributions often exhibit  planted structure, like the trip planning scenario in Appendix~\ref{app:trip}. Can we prove tighter bounds or exact recovery guarantees for our algorithms under these generative assumptions?

Stepping back, we are treating the predictive model as fixed. However, conformal validity holds for \textit{any} scoring function. This allows for an end-to-end approach where the model $f_\theta$ is trained to explicitly optimize for compressibility, One could, for instance, augment the training objective with a differentiable ``graph compression'' regularizer. However, such regularizers will introduce significant non-convexity, addressing which forms an interesting research direction.



\paragraph{Use of generative AI in writing this paper.}
We used an LLM (Gemini 3 Pro) in this work for the following tasks: (1) identifying  and summarizing prior work; (2) paraphrasing and polishing the text; and (3) the development of the synthetic and real-world navigation experiments, which included understanding the structure of the Porto dataset, and implementing the Python code. All AI-generated content, particularly citations and code logic, was verified by the authors, who take full responsibility for its correctness.


\bibliographystyle{plain}
\bibliography{references}

\appendix

\section{Proof of Lemma~\ref{lem:conformal_proof}}
\label{app:conformal_proof}

We analyze the probability space defined by the randomness of the calibration splits $\mathcal{D}_1, \mathcal{D}_2$ and the test point $(A^*, B^*)$.

First, condition on the realization of $\mathcal{D}_1$. Since $\mathcal{D}_1$ and $\mathcal{D}_2$ are disjoint, the threshold $d^*$ (computed solely on $\mathcal{D}_1$) is treated as a fixed constant with respect to $\mathcal{D}_2$. Consequently, the mapping from any input-output pair $(A, B)$ to its nested nonconformity score $\eta$ depends only on deterministic functions (the fixed model, the metric $f$, the fixed threshold $d^*$, and the deterministic compression algorithm).

Under the standard exchangeability assumption, the scores $\{\eta_i\}_{i \in \mathcal{D}_2}$ computed on the second calibration split and the score of the test point $\eta^*$ are exchangeable random variables. By the standard property of conformal prediction quantiles~\cite{vovk2005algorithmic}, choosing $\tau^*$ as the $\lceil \phi (|\mathcal{D}_2|+1) \rceil$-th smallest score guarantees:
\[ \mathbb{P}(\eta^* \le \tau^* \mid \mathcal{D}_1) \ge \phi. \]
Since this holds for any realization of $\mathcal{D}_1$, we have
\begin{equation}
    \label{eq:score_validity}
    \mathbb{P}(\eta^* \le \tau^*) \ge \phi.
\end{equation}

Next, we analyze the relationship between the score condition $\eta^* \le \tau^*$ and the coverage event $B^* \subseteq K_{\tau^*}(A^*)$. Recall the definition of the score $\eta^*$:
\begin{itemize}
    \item If $B^* \in S_{d^*}(A^*)$, then $\eta^*$ is the smallest $\tau$ such that $K_\tau$ covers $B^*$. Thus, $\eta^* \le \tau^*$ implies $B^* \subseteq K_{\tau^*}(A^*)$.
    \item If $B^* \notin S_{d^*}(A^*)$ (Stage 1 failure), then $\eta^* = 1$. In this case, the condition $\eta^* \le \tau^*$ can only be satisfied if $\tau^* = 1$. Even if $\tau^*=1$, the subgraph $K_1$ cannot contain $B^*$. To see this, note that if the ground-truth $B^*$ was lost in Stage 1 ($B^* \notin S_{d^*}$), it means $B^*$ utilizes at least one road segment that is not in $S_{d^*}(A^*)$. Therefore, no subset of $S_{d^*}(A^*)$ (not even the maximal subset $K_1$) can fully cover $B^*$.
\end{itemize}
Therefore, the event $\{\eta^* \le \tau^*\}$ is equivalent to the union of two disjoint events:
\[ \{\eta^* \le \tau^*\} \iff \{B^* \subseteq K_{\tau^*}(A^*)\} \cup \{B^* \notin S_{d^*}(A^*) \land \tau^* = 1\}. \]
Rearranging this probability:
\[ \mathbb{P}(B^* \subseteq K_{\tau^*}(A^*)) = \mathbb{P}(\eta^* \le \tau^*) - \mathbb{P}(B^* \notin S_{d^*}(A^*) \land \tau^* = 1). \]
By the exchangeability of $(A^*, B^*)$ with $\mathcal{D}_1$, the threshold $d^*$ computed in the pre-filtering stage ensures:
\[ \mathbb{P}(B^* \notin S_{d^*}(A^*)) \le \delta. \]
This implies the joint event is also bounded:
\[ \mathbb{P}(B^* \notin S_{d^*}(A^*) \land \tau^* = 1) \le \mathbb{P}(B^* \notin S_{d^*}(A^*)) \le \delta. \]
Combining this with Eq.~\eqref{eq:score_validity}:
\[ \mathbb{P}(B^* \subseteq K_{\tau^*}(A^*)) \ge \phi - \delta. \]
This completes the proof. \qed
\section{NP-Hardness of the Conformal Subgraph Problem}
\label{app:np-hard}
Our main result in this section is to show that the conformal subgraph problem is {\sc NP-Hard} even in the regime where $\epsilon$ is a constant.

\begin{theorem}
Fix any constant $0<\epsilon<1$.
The decision problem
\[
\text{Given a graph }G'=(V',E'),\ k\in\mathbb N,\ \text{is there }K\subseteq V'
\text{ with }|K|\le k\text{ and }e_{G'}(K)\ge(1-\epsilon)W?
\]
(where $W=|E'|$) is {\sc NP-Hard}.
\end{theorem}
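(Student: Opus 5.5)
We prove the theorem by a polynomial-time reduction from \textsc{Clique}, which is {\sc NP}-complete. The reduction keeps the optimal set a clique, but pads the graph with a large, fixed structure that must be kept anyway. This padding pushes the required coverage fraction to exactly $1-\epsilon$ for the prescribed constant $\epsilon$.

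\textbf{Construction.} Start from a \textsc{Clique} instance $(G=(V,E),k)$ with $|V|=n$. Assume $k\ge 3$ and $\binom{k}{2}\le |E|$; otherwise the instance is trivial. Let $c=\binom{k}{2}$. We build $G'$ as the disjoint union of three parts:
\begin{itemize}
\item a copy of $G$;
\item a ``padding'' clique $P$ on $p$ vertices;
\item a ``dust'' collection $D$ of $d$ disjoint single edges.
\end{itemize}
The budget is $k' = p + k$. The total weight is $W = |E| + \binom{p}{2} + d$, and the target coverage is $\binom{p}{2} + c$.

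\textbf{Calibrating $\epsilon$.} We need the target to equal $(1-\epsilon)W$, or at least we need $\lceil (1-\epsilon)W\rceil = \binom{p}{2}+c$. Equivalently, the uncovered weight $|E|-c+d$ must be about $\epsilon W$. We choose $p$ polynomial in $n$ with $\binom{p}{2}$ of order $(1-\epsilon)W$. We then add dust edges one at a time to fine-tune $W$, so that the ceiling lands exactly on $\binom{p}{2}+c$. A single dust edge changes $(1-\epsilon)W$ by less than $1$, so some value of $d$ achieves equality of ceilings. The needed $d$ is about $\epsilon\binom{p}{2}/(1-\epsilon)$, which is polynomial since $\epsilon$ is a constant. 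Dust edges are useless to a solution: each one costs $2$ vertices and covers only weight $1$.

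\textbf{Correctness.}
\begin{itemize}
\item \emph{Forward direction.} If $G$ has a $k$-clique $C$, then $K = P\cup C$ has $|K| = p+k$ and covers $\binom{p}{2}+c$.
\item \emph{Backward direction.} Suppose $|K|\le p+k$ and $K$ covers at least $\binom{p}{2}+c$. Write $a=|K\cap P|$ and $b=|K\setminus P|$.
\begin{itemize}
\item If $a<p$, then since $b\le k+(p-a)$, the coverage is at most $\binom{a}{2} + \binom{b}{2}$. Take $p \gg n^2$. Then removing $j=p-a\ge1$ padding vertices loses about $jp$ weight, while $j$ extra outside vertices gain at most $jn$ edges. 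This is far less, so the coverage falls short.
\item Hence $a=p$ and $b\le k$. The remaining $k$ vertices must induce at least $c=\binom{k}{2}$ edges of $G\cup D$. Dust edges live on disjoint pairs, so they cannot help reach $\binom{k}{2}$ with $k\ge3$. The $b$ vertices must therefore be a $k$-clique in $G$.
\end{itemize}
\end{itemize}

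\textbf{Main obstacle.} The delicate step is the exact arithmetic calibration: choosing $p$ and $d$ so that $\lceil(1-\epsilon)W\rceil$ equals the intended threshold for an arbitrary fixed real $\epsilon$. We handle it by the monotone one-edge-at-a-time padding argument: increasing $d$ by one raises $(1-\epsilon)W$ by $1-\epsilon<1$. The rest is routine inequality checking. The exchange argument forcing $P\subseteq K$ only needs $p$ a sufficiently large polynomial, so that losing padding vertices is always worse than any gain elsewhere.
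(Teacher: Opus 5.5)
Your proposal is correct and uses essentially the same reduction as the paper. Both start from a \textsc{Clique} instance, add a large clique gadget that every feasible solution must contain, and add isolated padding edges to calibrate $W$ so that $\lceil(1-\epsilon)W\rceil=\binom{p}{2}+\binom{k}{2}$. Both then use an exchange argument: dropping gadget vertices costs more than extra vertices of degree at most $n$ can recover.

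The differences are cosmetic. The paper takes many disjoint copies of $G$ to force its gadget to be large, whereas you choose $p$ directly. Your requirement $p\gg n^2$ is more than needed: the loss from dropping $j$ gadget vertices is at least $j(p-1)/2$ (not $jp$), so $p>2n+1$ already suffices. Your assumption $k\ge 3$ also disposes of the dust edges more cleanly than the paper's informal exclusion of padding vertices.
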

\begin{proof}
Let $(G=(V,E),r)$ be an arbitrary CLIQUE instance with $n=|V|$, $m=|E|$, and maximum degree $d=\max_{v\in V}\deg_G(v)$. A YES instance has a clique of size $r$, while a NO instance has maximum clique size less than $r$.

We construct $G'$ as follows. Form $G_c$ as the disjoint union of $c$ identical copies of $G$. The number of edges in $G_c$ is $m_c := c m$, while the per-vertex degree remains at most $d$. Add a disjoint gadget clique $C$ on $s$ new vertices. Finally, add a set $P$ of disjoint ``padding'' edges (isolated edges that do not touch $G_c$ or $C$).

Let $W$ be the total number of edges in $G'$. We define the target vertex budget $k' := r + s$.
We define the edge count of a solution that includes the gadget and an $r$-clique as:
\[ L_{YES} := \binom{s}{2} + \binom{r}{2}. \]
If the instance is a NO instance, any set of $r$ vertices from $G_c$ induces at most $\binom{r}{2}-1$ edges. Thus, a solution with the gadget and $r$ vertices from $G_c$ would have at most $L_{YES}-1$ edges.

We choose the parameters $c$, $s$, and the padding size $|P|$ so that the following three properties hold:
\begin{enumerate}
  \item[(P1)] The threshold separates the YES and NO cases. We require $W$ to satisfy:
    \begin{equation}\label{eq:P1}
      L_{YES} - 1 \;<\; (1-\epsilon)W \;\le\; L_{YES}.
    \end{equation}
    This ensures that obtaining $L_{YES}$ edges is sufficient, but obtaining $L_{YES}-1$ is not.
  \item[(P2)] Omitting even a single gadget vertex loses so many gadget
    edges that the lost edges cannot be compensated by adding available
    extra vertices from $G_c$ (whose individual degrees are $\le d$).
    Concretely, it suffices to enforce
    \begin{equation}\label{eq:P2}
      s-1 \;>\; 2 d.
    \end{equation}
  \item[(P3)] The graph construction is valid. We need the required total weight $W$ to be at least the number of edges in the main components ($m_c + \binom{s}{2}$), so that the number of padding edges $|P| = W - (m_c + \binom{s}{2})$ is non-negative.
\end{enumerate}

\paragraph{Parameter Selection.}
We assume $m$ is large. We perform the choices in the following order:
\begin{enumerate}
    \item For sufficiently large $c$, choose $s$ as the smallest integer such that $\epsilon \cdot \binom{s}{2} \ge (1-\epsilon) m_c$. Since the RHS grows with $c$ while $d = O(n)$, for polynomially large $c$ the condition $s > 2d +1$ from \eqref{eq:P2} is satisfied.
    
    \item Rearranging \eqref{eq:P1}, we need an integer $W$ in the interval  $ \left( \frac{L_{YES}-1}{1-\epsilon}, \frac{L_{YES}}{1-\epsilon} \right]. $ The length of this interval is $\frac{1}{1-\epsilon}$. Since $0 < \epsilon < 1$, the length is strictly greater than 1. Therefore, such an integer $W$ exists.
    
    \item Set the number of padding edges $|P| = W - (m_c + \binom{s}{2})$. We must verify that $|P| \ge 0$. 
    From the choice of $W$, we know $W > \frac{L_{YES}-1}{1-\epsilon}$. Since $r \ge 2$, $\binom{r}{2} \ge 1$, and thus $L_{YES} - 1 = \binom{s}{2} + \binom{r}{2} - 1 \ge \binom{s}{2}$.
    Therefore, $W \ge \frac{1}{1-\epsilon} \binom{s}{2}$.
    From the choice of $s$, we have $\epsilon \binom{s}{2} \ge (1-\epsilon)m_c$. Adding $(1-\epsilon)\binom{s}{2}$ to both sides yields $\binom{s}{2} \ge (1-\epsilon)(m_c + \binom{s}{2})$, or equivalently $\frac{1}{1-\epsilon}\binom{s}{2} \ge m_c + \binom{s}{2}$.
    Combining these inequalities gives $W \ge m_c + \binom{s}{2}$, ensuring $|P| \ge 0$.
\end{enumerate}

\paragraph{Correctness.}
First, if $G$ contains an $r$-clique $Q$, let $K = V(C) \cup Q$. Then $|K| = s+r = k'$. The induced edges are $e(K) = \binom{s}{2} + \binom{r}{2} = L_{YES}$. By the RHS of \eqref{eq:P1}, $L_{YES} \ge (1-\epsilon)W$. Thus, this is a YES instance. Conversely, suppose there exists $K' \subseteq V(G')$ with $|K'| \le k'$ and $e_{G'}(K') \ge (1-\epsilon)W$. Note that padding edges are isolated and disjoint; selecting vertices incident to padding edges is strictly suboptimal compared to selecting vertices in $C$ (degree $s-1$) or $G_c$ (degree up to $d$), so we assume $K' \subseteq V(G_c) \cup V(C)$.
Let $x = |K' \cap V(C)|$ and $y = |K' \cap V(G_c)|$, so $x+y \le r+s$.

\begin{itemize}
    \item \textbf{Case 1: $x=s$.} All gadget vertices are chosen. Then $y \le r$. The edges induced are $\binom{s}{2} + e_{G_c}(K' \cap V(G_c))$. Since $e_{G'}(K') \ge (1-\epsilon)W$, by the LHS of \eqref{eq:P1} we must have $e_{G'}(K') > L_{YES} - 1$.
    Therefore, the $y$ vertices in $G_c$ must induce $> \binom{r}{2}-1$ edges. This implies they induce at least $\binom{r}{2}$ edges. Thus, $G_c$ contains an $r$-clique.
    
\item \textbf{Case 2: $x \le s-1$.} Let $z \ge 1$ be the number of gadget vertices omitted from the final subgraph. The number of edges lost from the gadget is $\binom{s}{2} - \binom{s-z}{2}$, which is at least $\frac{z(s-1)}{2}$. If we choose $z$ extra vertices from $G_c$ (beyond the $r$ allowed), the number of edges gained there is at most $z \cdot d$ (since every vertex in $G_c$ has degree at most $d$). By \eqref{eq:P2}, we have $s-1 > 2d$, which implies $\frac{z(s-1)}{2} > z \cdot d$. Thus, the loss from the gadget strictly exceeds the gain from $G_c$. Any such subgraph with $x + y$ vertices therefore has strictly fewer than $L_{YES} = \binom{s}{2} + \binom{r}{2}$ edges. Consequently, the weight of this subgraph is strictly below $(1-\epsilon)W$ and this case is impossible.
\end{itemize}

The reduction is poly-time as $c$ is polynomial in the input size. This proves {\sc NP-hardness} for any fixed constant $\epsilon$.
\end{proof}
\section{Proof of Theorem~\ref{thm:main-det}}
\label{app:mainproof}

By definition, every vertex $v \in K$ has fractional value $x^*_v \ge \rho$. Thus,
\begin{equation}
\label{eq:size-bound}
|K| = \sum_{v \in K} 1 \le \sum_{v \in K} \frac{x^*_v}{\rho} \le \frac{1}{\rho} \sum_{v \in V} x^*_v \le \frac{1}{\rho} r.
\end{equation}
Substituting $\rho = \frac{\kappa}{1+\kappa}$, we obtain the size bound on $|K|$.

Next, we bound the weight of the edges \emph{not} covered (lost) by $K$. Let $W_{\text{lost}} = W - e(K)$ be the total weight of hyperedges not induced by $K$. A hyperedge $e$ is not induced in $K$ if and only if at least one of its vertices $v \in e$ is excluded from $K$ (i.e., $x^*_v < \rho$). From the LP constraints, for every edge $e$, we have $z^*_e \le \min_{v \in e} x^*_v$. Therefore, if edge $e$ is not induced in $K$, it must satisfy $z^*_e < \rho$.  Since $1-z^*_e \ge 0$, we have:
\[
W_{\text{lost}} = \sum_{e \not\subseteq K} w_e < \sum_{e \not\subseteq K} w_e \left( \frac{1 - z^*_e}{1 - \rho} \right) \le \frac{1}{1-\rho} \sum_{e \in \mathcal{E}} w_e (1 - z^*_e).
\]
The LP constraint  implies $\sum w_e (1 - z^*_e) \le \epsilon W$. Substituting this into the inequality above:
\[
W - e(K) = W_{\text{lost}} \le \frac{1}{1-\rho} (\epsilon W) = (1+\kappa) \epsilon W.
\]
This completes the proof. \qed

\section{Proof of Theorem~\ref{thm:mon}}
\label{app:monotonicity}

Recall the  LP formulation  in Section~\ref{sec:algorithm}. We wish to minimize the total vertex mass $\sum x_v$ subject to the coverage constraint $\sum w_e z_e \ge \tau \cdot W$ and the constraints $z_e \le x_v$. As discussed in Section~\ref{sec:mon}, consider the Lagrangian of the coverage constraint with a multiplier $\lambda \ge 0$. The objective becomes:
\begin{equation}
    \label{eq:lagrangian_lp}
    \mathcal{L}(\lambda) = \min_{x, z \in [0,1]} \left( \sum_{v \in V} x_v - \lambda \sum_{e \in \mathcal{E}} w_e z_e \right) \quad \text{s.t.} \quad z_e \le x_v \quad \forall e \in \mathcal{E}, \forall v \in e.
\end{equation}

We first have the following, which follows from an analogous result for densest ratio subgraphs:

\begin{lemma}[Integrality of the Lagrangian~\cite{charikar2000greedy}]
\label{lem:integrality}
For any $\lambda \ge 0$, there exists an optimal solution $(x^*, z^*)$ to $\mathcal{L}(\lambda)$
such that $x^*_v \in \{0, 1\}$ for all $v$ and $z^*_e \in \{0, 1\}$ for all $e$.
\end{lemma}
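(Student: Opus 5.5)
The plan is a threshold-rounding argument in the style of Charikar's proof for the densest-subgraph LP. Fix $\lambda \ge 0$ and let $(x,z)$ be any optimal solution of $\mathcal{L}(\lambda)$.

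\textbf{Normalizing $z$.} We may assume $z_e = \min_{v\in e} x_v$ for every $e$. The objective has coefficient $-\lambda w_e \le 0$ on $z_e$, so raising $z_e$ to this value keeps feasibility and does not increase the objective.

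\textbf{Threshold sets.} For $\theta \in (0,1]$ define $S_\theta = \{v : x_v \ge \theta\}$. Let $(x^\theta, z^\theta)$ be its indicator solution: $x^\theta_v = \mathbf{1}[v\in S_\theta]$ and $z^\theta_e = \mathbf{1}[e \subseteq S_\theta]$. Each such pair is integral and feasible.

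\textbf{Averaging over $\theta$.} Draw $\theta$ uniformly from $(0,1]$. Since $x_v \in [0,1]$, we get $\mathbb{E}[x^\theta_v] = \Pr[\theta \le x_v] = x_v$. Also $e\subseteq S_\theta$ holds exactly when $\theta \le \min_{v\in e} x_v = z_e$, so $\mathbb{E}[z^\theta_e] = z_e$. By linearity the objective satisfies
\[
\int_0^1 \Bigl(\sum_v x^\theta_v - \lambda \sum_e w_e z^\theta_e\Bigr)\, d\theta \;=\; \sum_v x_v - \lambda\sum_e w_e z_e \;=\; \mathcal{L}(\lambda).
\]
Each integrand value is the objective of a feasible solution, so it is at least $\mathcal{L}(\lambda)$. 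An average equal to the minimum forces the integrand to equal $\mathcal{L}(\lambda)$ for almost every $\theta$. Picking any such $\theta$ gives the integral optimum $(x^\theta,z^\theta)$. Alternatively, the integrand is a step function with finitely many breakpoints (the distinct values $x_v$), so some $\theta$ attains a value $\le \mathcal{L}(\lambda)$, and that suffices.

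\textbf{Main obstacle.} The only delicate point is the identity $\mathbb{E}[z^\theta_e]=z_e$. It relies on the normalization $z_e=\min_{v\in e}x_v$ from the first step, and on the fact that a hyperedge is induced by $S_\theta$ precisely when all its vertices clear the threshold. Both hold for hyperedges of arbitrary size, so the graph argument carries over to hypergraphs without change.
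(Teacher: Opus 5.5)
Your proposal is correct and follows the same threshold-rounding argument as the paper: pick a uniform random threshold $\theta$, round $x_v$ to $\mathbf{1}[x_v \ge \theta]$, and note that the Lagrangian objective is preserved in expectation. Your normalization $z_e = \min_{v\in e} x_v$ and the averaging (or step-function) argument that extracts a single optimal $\theta$ make rigorous the steps that the paper's one-line proof leaves implicit.
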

\begin{proof}
    Pick a threshold $\alpha \in[0, 1]$ uniformly at random, and set $x_v = 1$ if $x_v \ge \alpha$ (and $0$ otherwise). Because the fractional LP forces the hyperedge variable $z_e \le \min_{v \in e} x_v$, setting the vertices this way ensures the constraints are preserved, and the Lagrangian objective is preserved in expectation. This implies the existence of an integer solution matching the fractional value.
\end{proof}

\subsection{Structure of the LP Solution}
As shown in Section~\ref{sec:mon}, the Lagrangian is a parametric min-cut problem with an integer optimum, and by Lemma~\ref{lem:lag-mon}, the optimal solution $X(\lambda)$ is nested in $\lambda$. Having established that the integer solutions to the Lagrangian relaxation are nested, we now characterize the optimal fractional solution $x^*(\tau)$ to the LP in Section~\ref{sec:algorithm} for a specific target coverage $\tau$.

\begin{lemma}[Structure of LP Solution]
\label{lem:lp_structure}
Let $\tau$ be a target coverage. There exists a critical Lagrange multiplier $\lambda^* \ge 0$ and two vertex sets $S^-$ and $S^+$ such that:
\begin{enumerate}
    \item $S^-$ and $S^+$ are both optimal integral solutions to the Lagrangian relaxation at $\lambda^*$.
    \item $S^-$ is nested within $S^+$ (i.e., $S^- \subseteq S^+$).
    \item The optimal fractional LP solution $x^*(\tau)$ is a convex combination:
    \[ x^*(\tau) = (1-\alpha) \mathbf{1}_{S^-} + \alpha \mathbf{1}_{S^+} \]
    where $\alpha$ is chosen such that the weight constraint $\sum_e w_e \cdot z_e \ge \tau \cdot W$ is met.
\end{enumerate}
\end{lemma}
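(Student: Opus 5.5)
We prove Lemma~\ref{lem:lp_structure} with LP duality. The strategy has three parts: find the right multiplier $\lambda^*$, identify the two optimal cuts $S^-$ and $S^+$ at $\lambda^*$, and interpolate between them.

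\textbf{Step 1: the multiplier $\lambda^*$.} Write the LP of Section~\ref{sec:lp} with coverage constraint $\sum_e w_e z_e \ge \tau W$. The LP is feasible (take $x=z=\mathbf{1}$) and its objective is bounded, so strong duality holds. Dualizing only the coverage constraint, we get
\[ \mathrm{OPT}_{LP}(\tau) = \max_{\lambda \ge 0}\bigl(\mathcal{L}(\lambda) + \lambda \tau W\bigr). \]
We take $\lambda^*$ to be a maximizer. It exists because $\mathcal{L}(\lambda)+\lambda\tau W$ is concave and piecewise linear in $\lambda$; it is finite and, by Lemma~\ref{lem:lag-mon}, it is the minimum of finitely many linear functions, one for each $S_i$. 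If $\tau W$ is at most the weight induced by $S_0=\emptyset$, the claim is trivial, so we assume $\lambda^*>0$.

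\textbf{Step 2: the two optimal cuts.} By Lemma~\ref{lem:integrality} and Lemma~\ref{lem:lag-mon}, every $\lambda$ has integral optimal solutions of $\mathcal{L}(\lambda)$ lying in the chain $S_0\subset\cdots\subset S_k$. Fix any $S$ in the chain. On the set of $\lambda$ where $S$ is optimal, $\mathcal{L}(\lambda)=|S|-\lambda W_{\text{induced}}(S)$, so the slope of $\mathcal{L}(\lambda)+\lambda\tau W$ there is $\tau W - W_{\text{induced}}(S)$. Define $S^-$ to be the smallest chain set optimal at $\lambda^*$ and $S^+$ the largest. These are the limits of $X(\lambda)$ from the left and from the right. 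They are nested by the chain property, which gives item 2, and both are optimal at $\lambda^*$, which gives item 1. Optimality of $\lambda^*$ for the concave dual says the left derivative is at least $0$ and the right derivative is at most $0$. Since $X(\lambda)$ is increasing in $\lambda$, this translates to
\[ W_{\text{induced}}(S^-) \le \tau W \le W_{\text{induced}}(S^+). \]

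\textbf{Step 3: interpolation.} Choose $\alpha\in[0,1]$ so that the convex combination $x=(1-\alpha)\mathbf{1}_{S^-}+\alpha\mathbf{1}_{S^+}$, together with the corresponding $z$, has coverage exactly $\tau W$. For $z$ we take
\[ z_e=(1-\alpha)\mathbf{1}[e\subseteq S^-]+\alpha\mathbf{1}[e\subseteq S^+]. \]
This $z$ satisfies $z_e\le x_v$ for every $v\in e$ because $S^-\subseteq S^+$. The coverage is linear in $\alpha$, so such an $\alpha$ exists by the sandwich in Step 2. The pair $(x,z)$ is a convex combination of two Lagrangian optima, so it is itself optimal for $\mathcal{L}(\lambda^*)$, and it makes the coverage constraint tight. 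By complementary slackness it is therefore LP-optimal with value $\mathcal{L}(\lambda^*)+\lambda^*\tau W$.

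\textbf{Main obstacle.} The lemma says ``the'' optimal fractional solution, but LP optima need not be unique. For the lemma as written, it suffices to exhibit one optimal solution of this form, and I would interpret the statement that way: $x^*(\tau)$ denotes the optimum chosen by the algorithm. If the full set of optima must be covered, I would argue as follows. Every LP optimum $x$ must itself be optimal for $\mathcal{L}(\lambda^*)$, by complementary slackness with $\lambda^*>0$. By the randomized threshold argument of Lemma~\ref{lem:integrality}, each level set $\{v: x_v\ge\theta\}$ is then a min cut at $\lambda^*$. Min cuts at a fixed $\lambda^*$ form a lattice, so these level sets are sandwiched between $S^-$ and $S^+$. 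This sandwiching is the delicate step: in general it gives a convex combination of several nested min cuts rather than exactly two. So I would settle for existence of an optimum of the stated form, and make sure that this is all Theorem~\ref{thm:mon} needs.
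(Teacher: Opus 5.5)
Your argument is correct and is essentially the paper's proof: strong duality on the coverage constraint gives $\lambda^*$; the optima on either side of $\lambda^*$ are nested, both optimal at $\lambda^*$, and sandwich $\tau W$; and interpolating between them gives a feasible primal point whose value equals the dual bound $\mathcal{L}(\lambda^*)+\lambda^*\tau W$ (the paper takes optima at $\lambda^*\pm\delta$, which are your smallest and largest chain sets optimal at $\lambda^*$). Like the paper, you only show that some LP optimum has this two-set form; your caveat that other optima need not have it, so that Theorem~\ref{thm:mon} tacitly relies on the solver returning this canonical optimum, is a valid point the paper's proof does not address.
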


\begin{proof}
Let $L(\lambda) = \min_{K} (|K| - \lambda W_{\text{induced}}(K))$ be the Lagrangian dual function. By strong duality, there exists a $\lambda^*$ such that the optimal LP value is $L(\lambda^*) + \lambda^* \tau \cdot W$, where $W = \sum_{e \in \mathcal E} w_e$.
This $\lambda^*$ corresponds to a ``breakpoint'' in the piecewise-linear concave function $L(\lambda)$, where the subgradient contains $\tau$.

To define $S^-$ and $S^+$, we consider infinitesimal perturbations of $\lambda^*$: Let $S^-$ be an optimal solution for $\lambda^- = \lambda^* - \delta$ for arbitrarily small $\delta > 0$, and let $S^+$ be an optimal solution for $\lambda^+ = \lambda^* + \delta$. By Lemma~\ref{lem:lag-mon}, we have $S^- \subseteq S^+$. By the continuity of the objective function, as $\delta \to 0$, $S^-$ and $S^+$ must also be optimal for the exact parameter $\lambda^*$. Specifically:
\[ |S^-| - \lambda^* W(S^-) = |S^+| - \lambda^* W(S^+) = L(\lambda^*). \]
Since $\lambda^*$ is a breakpoint where the subgradient includes $\tau \cdot W$, we have $W(S^-) \le \tau \le W(S^+)$.
The vector $\hat{x} = (1-\alpha)\mathbf{1}_{S^-} + \alpha\mathbf{1}_{S^+}$ is a convex combination of two optimal integral solutions, so it is a valid solution to the Lagrangian relaxation.
By setting $\alpha = \frac{\tau \cdot W - W(S^-)}{W(S^+) - W(S^-)}$, the solution $\hat{x}$ satisfies the coverage constraint $\sum w_e z_e = \tau \cdot W$ with equality.
Substituting this into the primal objective:
\begin{align*}
    \text{Size}(\hat{x}) &= (1-\alpha)|S^-| + \alpha|S^+|  = (1-\alpha)(L(\lambda^*) + \lambda^* W(S^-)) + \alpha(L(\lambda^*) + \lambda^* W(S^+)) \\
    &= L(\lambda^*) + \lambda^* \left( (1-\alpha)W(S^-) + \alpha W(S^+) \right) = L(\lambda^*) + \lambda^* \tau \cdot W.
\end{align*}
Thus, $\hat{x}$ is the optimal primal solution.
\end{proof}

\subsection{The Rounding Procedure}

Finally, we show that the LP rounding algorithm is identical to the parametric min-cut algorithm in terms of the subgraphs generated, hence showing both algorithms satisfy Theorem~\ref{thm:main-det}, and are monotone.

\begin{lemma}[Structure of Threshold Rounding]
\label{lem:rounding_monotone}
Let $x^*(\tau)$ be the optimal fractional solution for target coverage $\tau$. Define the  vertex set $K_\tau$ by including all vertices where the fractional value exceeds a fixed threshold $\rho \in (0, 1]$ (i.e., $K_\tau = \{ v \in V \mid x^*_v(\tau) \ge \rho \}$). Then, each $K_{\tau}$ is the optimal solution to $\mathcal{L}(\lambda)$ for some $\lambda$.
Further, the sequence of sets is nested:
\[ \tau_1 < \tau_2 \implies K_{\tau_1} \subseteq K_{\tau_2}. \] 
\end{lemma}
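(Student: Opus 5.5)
The plan is to feed Lemma~\ref{lem:lp_structure} into the threshold rule and then use the nesting in Lemma~\ref{lem:lag-mon}.

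\textbf{Step 1 (the rounded set is a Lagrangian optimum).} Fix $\tau$. Lemma~\ref{lem:lp_structure} gives $x^*(\tau) = (1-\alpha)\mathbf{1}_{S^-} + \alpha \mathbf{1}_{S^+}$ with $S^-\subseteq S^+$, both optimal for $\mathcal{L}(\lambda^*)$. So $x^*_v$ takes only three values: $1$ on $S^-$, $\alpha$ on $S^+\setminus S^-$, and $0$ off $S^+$. Hence, for $\rho\in(0,1]$,
\[ K_\tau = S^+ \text{ if } \alpha \ge \rho, \qquad K_\tau = S^- \text{ otherwise}. \]
In either case $K_\tau$ is an optimal integral solution of $\mathcal{L}(\lambda^*)$. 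By Lemma~\ref{lem:lag-mon} it is one of the chain sets $S_0\subset\cdots\subset S_k$. I would fix the choice of $S^\pm$ as the minimal and maximal optimal sets at $\lambda^*$; these are well defined because min-cut source sides are closed under union and intersection. This makes $K_\tau$ a well-defined function of $\tau$.

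\textbf{Step 2 (monotonicity).} Take $\tau_1<\tau_2$ with multipliers $\lambda_1^*,\lambda_2^*$. The LP value $\mathrm{OPT}(\tau)$ is convex in $\tau$, and $\lambda^*(\tau)$ is its slope, i.e.\ a subgradient. So I would first show $\lambda_1^*\le\lambda_2^*$.

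If $\lambda_1^*<\lambda_2^*$, then Lemma~\ref{lem:lag-mon} gives every optimal set at $\lambda_1^*$ contained in every optimal set at $\lambda_2^*$, since $S^+(\lambda_1^*)\subseteq S^-(\lambda_2^*)$. Therefore $K_{\tau_1}\subseteq K_{\tau_2}$.

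If $\lambda_1^*=\lambda_2^*$, the two LP optima are mixtures of the same pair $S^-\subseteq S^+$, which I take to be adjacent in the chain. The mixing weights are
\[ \alpha_i = \frac{\tau_i W - W(S^-)}{W(S^+)-W(S^-)}, \]
and $\alpha_i$ is increasing in $\tau_i$. The threshold rule then selects $S^+$ for $\tau_2$ whenever it does so for $\tau_1$, which again gives $K_{\tau_1}\subseteq K_{\tau_2}$.

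\textbf{Main obstacle.} The delicate point is that LP optima and the multiplier $\lambda^*$ need not be unique. If a different fractional optimum were chosen, it could mix non-adjacent chain sets, and thresholding it could break nesting. I would handle this by fixing the canonical optimum constructed in Lemma~\ref{lem:lp_structure}: the interpolation between consecutive chain sets $S_{j}\subseteq S_{j+1}$ whose induced weights bracket $\tau W$. These consecutive sets are exactly the breakpoints of the parametric min-cut. With this choice, $x^*(\tau)$ is coordinatewise nondecreasing in $\tau$. Thresholding a coordinatewise nondecreasing family of vectors gives nested sets, and this yields monotonicity directly. Verifying that this canonical interpolation is indeed LP-optimal is precisely the computation already done in Lemma~\ref{lem:lp_structure}.
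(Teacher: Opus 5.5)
Your proposal is correct and is essentially the paper's argument. Lemma~\ref{lem:lp_structure} makes $x^*(\tau)$ three-valued, so thresholding returns $S^-$ or $S^+$ (a Lagrangian optimum), and interpolating between consecutive chain sets makes $x^*(\tau)$ coordinatewise nondecreasing in $\tau$, which gives nesting. Your explicit choice of the canonical optimum (the minimal and maximal optimal sets at $\lambda^*$) is a genuine tightening: the paper makes this choice only implicitly when it asserts that on each interval ``the solution is given by'' the interpolation of $S_i$ and $S_{i+1}$, and nesting can fail for other LP optima.
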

\begin{proof}
From the proof of Lemma~\ref{lem:lp_structure}, the optimal sets for the Lagrangian can be chosen to be nested such that $S^- \subseteq S^+$. Consider the behavior of the variable $x^*_v(\tau)$ as $\tau$ increases. The algorithm progresses through the nested chain of sets $S_0 \subset S_1 \subset \dots$. Within any interval $(W(S_i), W(S_{i+1})]$, the solution is given by $x^* = (1-\alpha)\mathbf{1}_{S_i} + \alpha\mathbf{1}_{S_{i+1}}$. The value of $x^*_v$ for a specific vertex $v$ behaves as follows:
\begin{itemize}
    \item If $v \in S_i$, then $v \in S_{i+1}$ as well. Thus $x^*_v = 1$.
    \item If $v \in S_{i+1} \setminus S_i$, then $x^*_v = \alpha$. Since $\tau = (1-\alpha)W(S_i) + \alpha W(S_{i+1})$, $\alpha$ is strictly increasing with $\tau$.
    \item If $v \notin S_{i+1}$, then $x^*_v = 0$.
\end{itemize}

Observe now that for any $\tau$, this implies the LP variables are fractional for vertices contained in some $S_2$ but not in $S_1 \subset S_2$, where $S_1$ and $S_2$ are both optimal solutions to $\mathcal{L}(\lambda^*)$ for some $\lambda^*$. Further, these fractional values are all the same. This means threshold rounding either chooses $S_1$ or $S_2$ as $K_{\tau}$, so that $K(\tau) = \mathcal L(\lambda)$ for some $\lambda$. 

Finally, note that as $\tau$ moves to the next interval, the sets $S^-$ and $S^+$ change to $S_{i+1}$ and $S_{i+2}$, maintaining the non-decreasing property. Thus, for every vertex $v$, $x^*_v(\tau)$ is a non-decreasing function of $\tau$. This ensures that the LP solution is monotone. Consider any vertex $v \in K_{\tau_1}$. By definition, this implies $x^*_v(\tau_1) \ge \rho$. Since $\tau_2 > \tau_1$ and $x^*_v(\cdot)$ is non-decreasing, we have $x^*_v(\tau_2) \ge x^*_v(\tau_1) \ge \rho.$ Consequently, $v$ satisfies the threshold condition for $\tau_2$ and is included in $K_{\tau_2}$. Since this holds for any $v$, we conclude $K_{\tau_1} \subseteq K_{\tau_2}$. This completes the proof.
\end{proof}

\paragraph{Proof of Theorem~\ref{thm:mon}.} This directly follows from Lemma~\ref{lem:rounding_monotone}. \qed

\section{Sampling Hyperedges and Uniform Convergence}
\label{sec:sampling}
\subsection{Uniform Convergence}
In applications where the hyperedge family \(\mathcal E\) is too large to process directly, we replace \(\mathcal E\) by an i.i.d.\ sample \(\widehat{\mathcal E}=\{ \widehat e_1,\dots,\widehat e_{\hat{m}}\}\) from the distribution over hyperedges that places probability mass proportional to edge weight \(w_e\) on hyperedge \(e\).  All steps in the conformal compression algorithm in Section~\ref{sec:algorithm} can then be executed on the sample.  The following  statement quantifies the sample size needed so that every vertex subset's induced mass is approximated uniformly.

\begin{lemma}[Uniform convergence~\cite{devroye2001combinatorial}, Chapter 4]
\label{lem:uniform-sample}
Let \(H=(V,\mathcal E)\) be a hypergraph with \(|V|=n\).  For every
vertex subset \(S\subseteq V\) define the induced total population mass
\(e(S) := \sum_{e\in\mathcal E: e\subseteq S} w_e\) and total mass
\(W:=\sum_{e\in\mathcal E} w_e\).  Let \(\widehat{\mathcal E}\) be an
i.i.d.\ sample of \(\hat{m}\) hyperedges drawn from the distribution that
chooses \(e\) with probability \(w_e/W\).  Fix \(\alpha,\delta\in(0,1)\).
There exists an absolute constant \(C\) such that if
\[
  \hat{m} \;\ge\; C\cdot\frac{\,\mathrm{VC}(\mathcal F) +\log(1/\delta)\,}{\alpha^2},
\]
where \(\mathcal F=\{f_S:\,f_S(e)=\mathbf 1\{e\subseteq S\},\ S\subseteq V\}\),
then with probability at least \(1-\delta\) (over the draw of
\(\widehat{\mathcal E}\)) the following holds simultaneously for every
\(S\subseteq V\):
\[
  \Bigg| \frac{\sum_{\widehat e\in\widehat{\mathcal E}} \mathbf1\{\widehat e\subseteq S\}}{\hat{m}}
         \;-\;
         \frac{e(S)}{W} \Bigg|
  \;\le\; \alpha.
\]
In particular, since \(\mathrm{VC}(\mathcal F)\le n\), it suffices to
take \(\hat{m} = O\!\big((n+\log(1/\delta))/\alpha^2\big)\).
\end{lemma}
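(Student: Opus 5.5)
The plan is to reduce the statement to a standard VC-class uniform deviation bound. Then the only work is bounding the VC dimension of $\mathcal F=\{f_S: S\subseteq V\}$.

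\textbf{Step 1: normalization.} Set $\mu(e)=w_e/W$. This is a probability distribution on $\mathcal E$, and $e(S)/W=\mathbb E_{e\sim\mu}[f_S(e)]$. The sample $\widehat{\mathcal E}$ consists of $\hat m$ i.i.d.\ draws from $\mu$, and the displayed empirical quantity is exactly the empirical mean of $f_S$. The claim is therefore that empirical means of $\{0,1\}$-valued functions in $\mathcal F$ are $\alpha$-close to their true means, uniformly over $\mathcal F$. This is precisely the uniform convergence theorem for classes of finite VC dimension (Vapnik--Chervonenkis with the chaining / Talagrand-type refinement, as in~\cite{devroye2001combinatorial}, Chapter 4). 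That theorem gives sample size $C(\mathrm{VC}(\mathcal F)+\log(1/\delta))/\alpha^2$ for an absolute constant $C$. I will cite it directly, taking the improved form without the extra $\log(1/\alpha)$ factor.

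\textbf{Step 2: bounding $\mathrm{VC}(\mathcal F)\le n$.} This is the only real content. I would try the cardinality argument first, since it is cleanest. The class $\mathcal F$ is indexed by subsets $S\subseteq V$, so it contains at most $2^n$ distinct functions. A class that shatters $d$ points must realize $2^d$ distinct labelings, so $2^d\le 2^n$, and hence $d\le n$. As a sanity check, this is tight up to constants. Take the singleton hyperedges $\{v\}$: these are shattered, since $\{v\}\subseteq S$ iff $v\in S$. So the bound $n$ cannot be improved in general.

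\textbf{Step 3: conclusion.} Substituting $\mathrm{VC}(\mathcal F)\le n$ gives $\hat m=O((n+\log(1/\delta))/\alpha^2)$. An alternative avoids the VC machinery entirely. Apply Hoeffding for each fixed $S$ with failure probability $\delta 2^{-n}$, then take a union bound over the $2^n$ sets. This already yields $\hat m=O((n\log 2+\log(1/\delta))/\alpha^2)$, which is the stated ``in particular'' bound.

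\textbf{Main obstacle.} The only subtle point is matching the exact form of the main inequality, with no $\log(1/\alpha)$ factor, to a citable theorem. If the cited reference gives only a $\log(1/\alpha)$-lossy version, I would fall back on the Hoeffding-plus-union-bound argument for the final claim.
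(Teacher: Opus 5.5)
Your proposal is correct and matches the paper, which also just invokes the standard VC uniform-convergence bound from \cite{devroye2001combinatorial} and asserts $\mathrm{VC}(\mathcal F)\le n$ without further argument. Your cardinality argument ($|\mathcal F|\le 2^n$ forces $2^d\le 2^n$) supplies that missing step, and your Hoeffding-plus-union-bound fallback over the $2^n$ sets $S$ gives a fully elementary proof of the $O((n+\log(1/\delta))/\alpha^2)$ form, which is the form the paper actually relies on later.
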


Lemma~\ref{lem:uniform-sample} implies that for every vertex set \(S\) the \emph{empirical} induced mass measured on \(\widehat{\mathcal E}\) approximates the true induced mass \(e(S)/W\) up to additive \(\alpha\). Consequently, running the  algorithm of Section~\ref{sec:algorithm} on the sampled hyperedges produces a vertex set \(\widehat K\) whose empirical retained mass is (by that algorithm) at least the target threshold.  By uniform convergence, the \emph{true} retained mass \(e(\widehat K)/W\) is within  an additive \(\alpha\) of the empirical value, and thus the bicriteria bound proved for the algorithm on the full hyperedge set degrades only by an additive \(O(\alpha)\) factor when executed on the sample.  In particular, for any desired additive tolerance \(\alpha>0\) one can choose $\hat{m}=O((n+\log(1/\delta))/\alpha^2) $ 
so that the sample solution satisfies the same size vs.\ retained-mass bicriteria guarantee as in Theorem~\ref{thm:main-det} up to an additive \(\alpha\) loss, with probability \(\ge1-\delta\). Therefore, in the perfectly calibrated case of the weights, this preserves the approximation guarantees for compression to an additive $O(\alpha)$.

In terms of calibration, sampling hyper-edges to approximate the compressor does not alter the exchangeability required for either distribution-free calibration or the ``learn--then--test'' conformal guarantee of Section~\ref{sec:model}. The sampled compressor remains a function solely of the model and input~$Q$, independent of the calibration labels~$B$, so the scores remain exchangeable, and the compressed graph remains monotone.    

\subsection{Efficient Sampling Oracle for the Set $S_d(A)$}
\label{sec:sampling_oracle}
To apply the uniform convergence result (Lemma~\ref{lem:uniform-sample}), we require a computationally efficient oracle that can sample hyper-edges at random from the conformal set $S_{d^*}(A^*)$. In this section, we assume the sampling weights are uniform. 

\subsubsection{Sampling Walks via Dynamic Programming}
We first consider the setting of $s$--$t$ routing with uniform weights, where the underlying graph has $n$ vertices and $m$ edges.  For efficient sampling, we make two assumptions on the conformal set: First, we assume the set $S_{d^*}(A^*)$ can include walks and not just paths. Second, we assume $A^*$ is a simple path directed from $s$ to $t$, and for any $B \in S_{d^*}(A^*)$, the edges in $A^* \cap B$ are directed in $B$ in the same way as in $A^*$. 

Under these assumptions, we direct the edges in $A^*$ from $s$ to $t$ in $E$, while leaving the remaining edges undirected. Any valid $B \in S_{d^*}(A^*)$ is a $s$--$t$ walk on this graph. Now treat $B$ as a multi-set of edges and consider the distance measure $f(A^*,B)$ as the number of edges in the multiset that do not belong to $A^*$.  

To interpret this, assign binary weights to the edges of the graph $G=(V, E)$ (with edges in $A^*$ directed as above), so that for $(u,v) \in A^*$, $w_{uv} = 0$, while for $(u,v) \notin A^*$, $w_{uv} = 1$.  Under this weighting, the condition $f(A^*, B) \le d^*$ is equivalent to the condition that the total weight of the $s$--$t$ walk $B$ is at most $d^*$. 

Let $N(u, k)$ denote the number of walks from node $u$ to the target $t$ with total accumulated weight exactly $k$. The base case is $N(t, 0) = 1$ (and $0$ for $k>0$). The recurrence relation is:
\[ N(u, k) = \sum_{(u, v) \in E} N(v, k - w_{uv}). \]
In the above recurrence, we assume the edges in $A^*$ are directed from $s$ to $t$ in $E$. Since edges have weights of either 0 or 1, it is now easy to check that we can compute the table iteratively in increasing order of $k$, and for each $k$, first in reverse order on the vertices in the path $A^*$, and then arbitrarily for the other vertices. This is because $N(u,k)$ for $u$ lying on $A^*$ depends on $N(v,k)$ if $(u,v) \in A^*$ and on $N(v,k-1)$ if $(u,v) \notin A^*$, while $N(u,k)$ for $u$ outside $A^*$ depends on $N(v,k-1)$ for edges $(u,v) \in E$. This procedure therefore computes exact counts in $O(d^* \cdot |E|)$ time.

Once the table $N(u, k)$ is populated, sampling a walk uniformly from $S_{d^*}(A^*)$ is reduced to a stateless random walk. First, we determine the total number of valid paths $Z = \sum_{j=0}^{d^*} N(s, j)$. We sample a target budget $K \in \{0, \dots, d^*\}$ with probability $N(s, K)/Z$.  We then construct the path node-by-node starting at state $(s,K)$. Given current state $(u,k)$, we consider states $(v,k')$ that contribute to $N(u,k)$ in the above recurrence, and transition to one of them with probability proportional to $N(v,k')$. We repeat this till $u = t$. This procedure generates independent, identically distributed samples from the uniform distribution over $S_{d^*}(A^*)$, satisfying the requirements for the uniform convergence bound.

\subsubsection{Extension to other Settings} 
This sampling approach generalizes to any structured output space that admits a dynamic programming decomposition. As an example, consider \emph{hierarchical classification} (also considered in~\cite{zhang2025conformalstructuredprediction}) where the label space is a tree $\mathcal{T}$ and a hyperedge corresponds to a valid subtree rooted at the origin (e.g., a pruned classification path). Given a model-output subtree $A^*$, let $S_{d^*}(A^*)$ define all subtrees rooted at the origin that have at most $d^*$ vertices not in $A^*$. As before, we assign weights to vertices such that $w_v = 1$ if $v \notin A^*$ and $0$ otherwise. We define the DP state $N(u, k)$ as the number of valid subtrees rooted at $u$ that contain exactly $k$ nodes not present in $A^*$. The recurrence relation becomes a convolution of the counts from the children of $u$: if $u$ has children $v_1, \dots, v_m$, then
\[ N(u, k) = \sum_{j_1 + \dots + j_m = k - w_u} \prod_{i=1}^m N(v_i, j_i). \]
Using standard DP techniques, this convolution can be computed efficiently for any $k$. Using the random walk method from Step (3) above, we can now generate an exact uniform sample of hierarchical conformal sets.

As a final example, consider the \emph{Trip Planning} setting from Section~\ref{sec:experiments} where the universe of activities is partitioned into disjoint groups $G_1, \dots, G_R$ (activity types), and a valid hyperedge consists of exactly one activity from each group. Given a reference itinerary $A^*$, let $S_{d^*}(A^*)$ denote the set of itineraries that have at most $d^*$ activities not in $A^*$. We assign weight $0$ to activities in $A^*$ and $1$ to all others. We define $N(r, k)$ as the number of valid partial itineraries using groups $r$ through $R$ that accumulate exactly cost $k$. We have:
\[ N(r, k) = \sum_{v \in G_r} N(r+1, k - w_v). \]
Sampling reduces to picking an activity $v \in G_r$ with probability proportional to $N(r+1, K - w_v)$ and recursing, which allows efficient uniform sampling of itineraries from $S_{d^*}(A^*)$.
\section{Proof of Theorem~\ref{cor:efficiency}}
\label{app:fixed}
\begin{proof} We will use the framework of \cite{gao2025volume} to get the conformal coverage guarantee. In fact, the approach of starting with a set with a size approximation guarantee to a conformity score function using a nested family is almost identical to Section 7 of \cite{gao2025highdimensional}. So we focus on how to adapt it to our setting.   From the uniform convergence result in Lemma~\ref{lem:uniform-sample}, we can run the LP rounding algorithm on a sample of size $T/2 = \Omega(|V| /\gamma^2)$ to obtain a set $K$, that using \Cref{thm:main-det}, already satisfies Guarantee 2 (Compression) of Theorem~\ref{cor:efficiency}.   

    Now, to construct a conformal predictor, it suffices to give a \emph{conformity score}. The procedure in Section~\ref{sec:fixed_optimal} constructs a  \emph{nested set system}. 
    We can use this set system to construct a conformity score, see e.g. Equation (10) and Assumption 2.4 in \cite{gao2025volume}, which implies a conformal predictor via split conformal prediction.  They have the property that they will only output sets \(S\) from the input nested set system $S_1, \dots, S_k$. Hence split conformal prediction ensures that as long as $Y_1, \dots, Y_{T+1}$ are exchangeable we have $\mathbb{P}(Y_{T+1} \in \hat{K}) \ge \phi$.

    Suppose \(Y_1, \dots, Y_{T + 1}\) are drawn i.i.d. and \(T =  \Omega(|V|/\gamma^2)\), then since \(K\) is in our nested set system, and \Cref{thm:main-det} guarantees that \(K\) achieves coverage \(\ge \phi \), the conformal predictor will not output any set in the system larger than \(K\).  This completes the proof.
\end{proof}

\section{Empirical Results}
\label{sec:experiments}

In this section, we evaluate the efficacy of our graph-based conformal compression framework. Our focus is on the combinatorial optimization challenge mirroring the fixed context setting (Section~\ref{sec:fixed}): compressing a large, complex set of candidate outputs (hyperedges) into a compact subgraph.    We consider two experimental settings: a  \textit{Navigation} setting based on road networks to study the trade-off between conformal calibration and compression, and a \textit{Trip Planning} simulation designed to test the algorithm's ability to recover planted structures. We present the real-world Navigation experiment in Appendix~\ref{sec:real}.

\subsection{Navigation}
\label{sec:nav_experiments}
In this setting, hyperedges correspond to routing paths in the graph, and hypergraph vertices correspond to edges in the graph.  We compare our LP-based algorithm against {\em greedy baselines}, which select graph edges strictly order of traffic frequency (number of paths). 

We consider both the ability of the algorithm to compress a set of routes by comparing to a greedy baseline, as well as the conformal coverage $\phi$ that ensues. We study the latter in the split conformal framework of Section~\ref{sec:fixed_optimal}: We generate hyper-edge samples $\{Y_j\}_{j=1}^{T/2}$ (denoted ``training'') and $\{Y_j\}_{j=T/2+1}^{T}$ (denoted ``test'')  from the same underlying model. We use the procedure in Section~\ref{sec:fixed_optimal} to find a nested sequence of subgraphs on the training samples, and choose a subgraph as described there to obtain coverage $\phi$ on the test samples. Note that by Theorem~\ref{thm:mon}, the nested subgraph sequence is fixed and oblivious to the parameter $\kappa$ (which is only needed to state Theorem~\ref{cor:efficiency}). Further, the number of sampled routes in our experiments is small enough that the LP method runs in under a minute for our instances. 

\paragraph{Greedy Baselines.} We consider two greedy baselines. In {\sc Forward Greedy}, we sort edges in decreasing order of number of training paths they cover, and choose them in this order till a desired coverage $\phi$ is achieved on the test samples.
The {\sc Reverse greedy} algorithm is a generalization of the greedy vertex deletion algorithm in~\cite{charikar2000greedy} to hypergraphs. Specialized to routing, it deletes edges in reverse order of traffic intensity (number of training paths); however, when it deletes an edge, it deletes all training paths passing through it, and recomputes traffic intensity on the left-over edges. We repeat till the coverage on the test samples falls below $\phi$, stopping just before that.  Such greedy algorithms are clearly monotone and will therefore yield conformal guarantees analogous to the LP method. However, as we show in Appendix~\ref{sec:real}, they will not provide a compression guarantee analogous to Theorem~\ref{thm:main-det}.

We will compare the number of edges used by the LP and greedy solutions for different values of  $\phi$. This allows us to study the calibration-compression trade-off  without the need to simulate a predictive model. 


\begin{figure*}[t]
    \centering
    \begin{subfigure}[b]{0.22\textwidth}
        \centering
        \includegraphics[width=\linewidth]{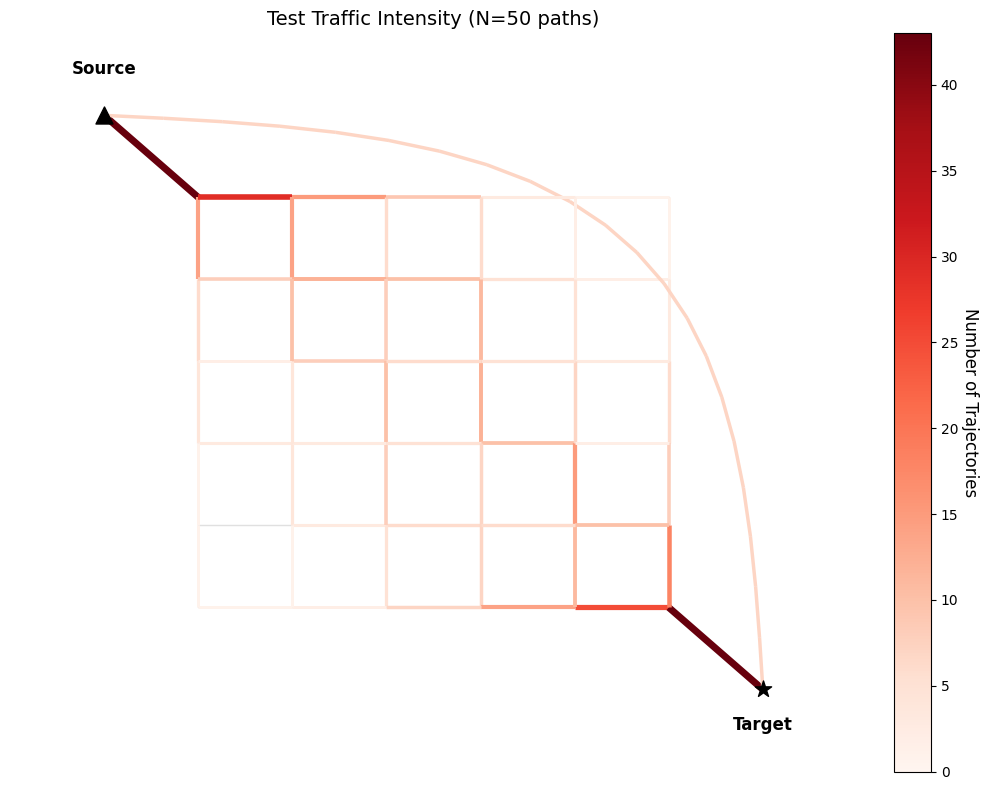} 
        \caption{Traffic Heatmap.}
        \label{fig:diagonal_visual1}
    \end{subfigure}
    \begin{subfigure}[b]{0.42\textwidth}
        \centering
        \includegraphics[width=\linewidth]{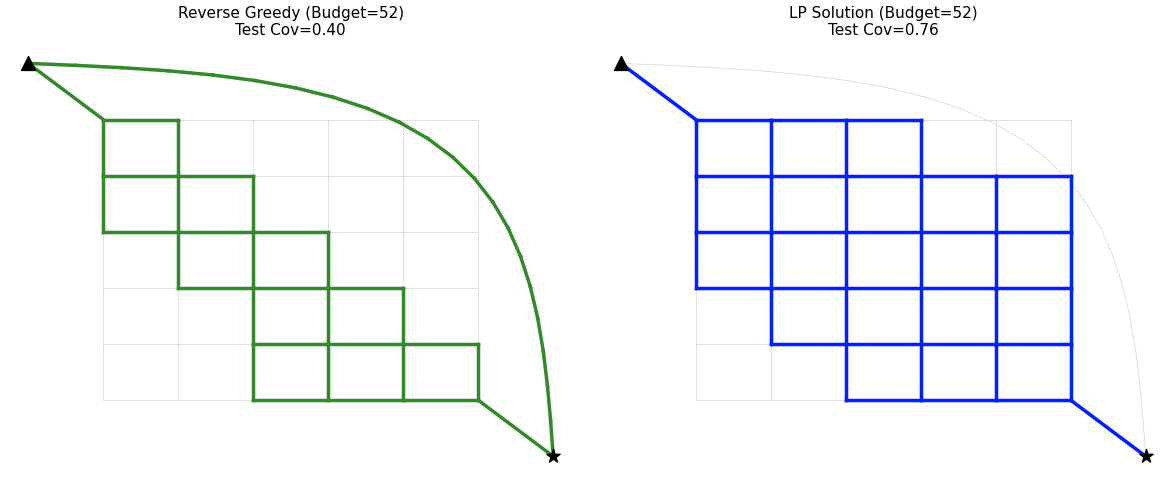}
        \caption{Reverse Greedy and LP solutions with $52$ edges.}
        \label{fig:diagonal_curve}
    \end{subfigure}
    \begin{subfigure}[b]{0.32\textwidth}
        \centering
        \includegraphics[width=\linewidth]{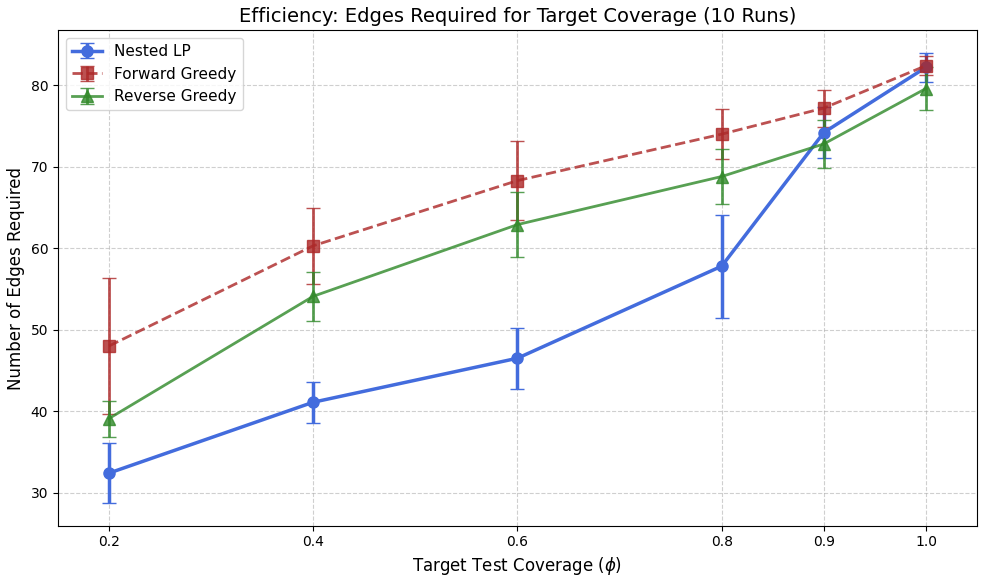} 
        \caption{Coverage plot averaging $10$ runs.}
        \label{fig:diagonal_visual2}
    \end{subfigure}
    \caption{Analysis of the synthetic urban routing scenario. }
    \label{fig:diagonal_combined}
\end{figure*}

\paragraph{Synthetic Routing Experiment.}
This simulation tests a realistic routing scenario where traffic between two points either takes a fast, but long bypass, or takes a city grid between the two points. 
We simulate a $6 \times 6$ grid. Traffic flows from a single source at the top-left to a single target at the bottom right.  We use this grid to sample $85\%$ of the paths using shortest-paths with random edge weights, where the weight on an edge is {\tt Uniform}$[0.1,2]$. There is a bypass route from the source to the target with $20$ edges that is taken by $15\%$ of the traffic. We generate a calibration set (train) and held out (test) set of $50$ routes.   The traffic heatmap is shown in Figure~\ref{fig:diagonal_combined}(a). The paths spread out in the middle of the grid, leading to lower traffic intensity there.   

Such a setting is not only the simplest instantiation of routing, but is also quite realistic for routing in urban areas such as Manhattan, where streets can get congested unpredictably, but there are a plethora of alternate routes. 

We implement the LP and greedy calibration procedures as described above. Figure~\ref{fig:diagonal_combined}(c) shows the number of edges used by the LP and greedy solutions, where the $x$-axis shows the test coverage $\phi$.  Note that the LP solution is significantly more compressed than either greedy algorithm for all $\phi \le 0.8$.  In Figure~\ref{fig:diagonal_combined}(b), we show the edge selections when the LP solution chooses $52$ edges ($\phi = 0.75$) and the reverse greedy algorithm is made to choose the same number. The reverse greedy algorithm prioritizes the highway edges, while the LP correctly identifies that the urban center, while consisting of individually lower-frequency edges, is needed for coverage, and omits the highway entirely.

\paragraph{Real-World Traffic Experiment.}
\label{sec:real}
We complement the previous experiment with real-world trajectory data. We use the Porto Taxi Trajectory Dataset~\cite{pkdd_2015_339}. On this dataset, the number of samples is small enough that the conformal guarantee $\phi$ differs significantly on the held-out set compared to the compression parameter $\tau$, so that we are not in the ``uniform convergence'' regime. We show that the nested subgraph approach in Section~\ref{sec:fixed_optimal} achieves robust compression in this regime (in addition to providing a worst-case theoretical compression guarantee).

The dataset consists of 1.7 million taxi trajectories in Porto, Portugal. We define a bounding box around the Greater Porto area and discretize this region into a $100 \times 100$ grid. Each valid grid cell ($\approx 100m \times 100m$) containing at least one GPS reading constitutes a vertex in our hypergraph. Each taxi trip is mapped to the set of unique grid cells it traverses, forming a hyperedge.

We filter on trips from the airport to the city center, resulting in around $300$ trips. Using $k$-means clustering on trajectory midpoints, we identify two distinct route modes naturally taken by drivers: The trips that take highways that circle around the city and the trips that cut diagonally (likely taking city streets).  We construct a dataset with 40\% of the former and 60\% of the latter trajectories. We split the data randomly into 100 train and 100 disjoint test routes. We use the training data to find the LP and greedy solutions, and use the test data to evaluate coverage.


\begin{figure*}[t]
    \centering
    \begin{subfigure}[b]{0.63\textwidth}
        \centering
        \includegraphics[width=\linewidth]{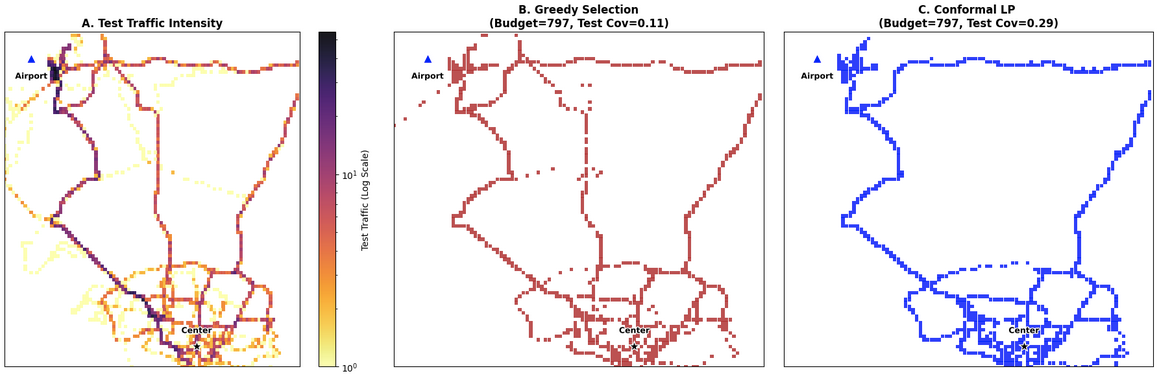} 
        \caption{\textbf{(Left)} Heatmap of test traffic. \textbf{(Center)} and {\bf (Right)}  Edge selection for Forward Greedy and LP respectively with the same edge budget of $797$ edges.}
        \label{fig:porto_visual}
    \end{subfigure}
    \hfill
    \begin{subfigure}[b]{0.35\textwidth}
        \centering
        \includegraphics[width=\linewidth]{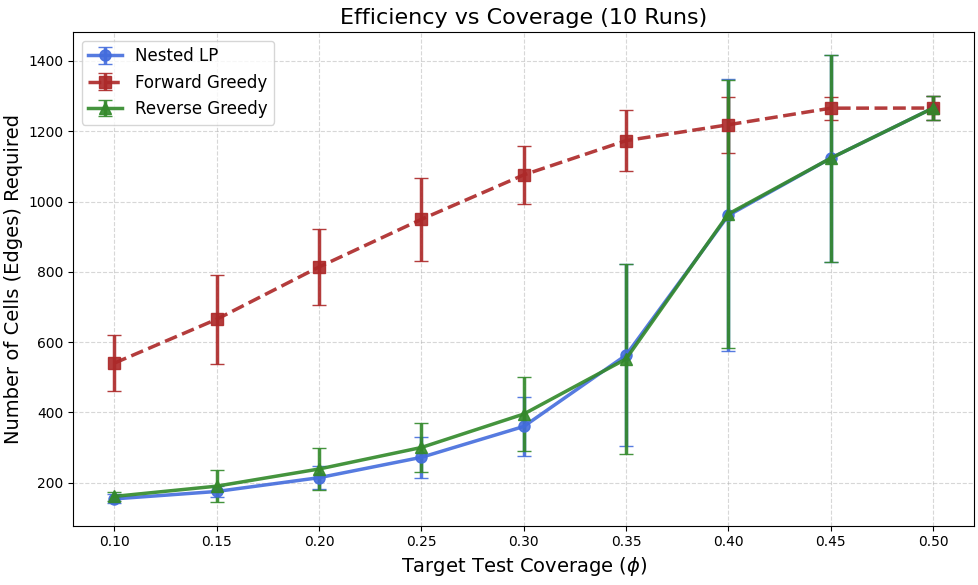} 
        \caption{Greedy and LP compression over $10$ runs. }
        \label{fig:porto_stats}
    \end{subfigure}
    \caption{Real-world validation on Porto Taxi data (Airport $\to$ Center).}
    \label{fig:porto_combined}
\end{figure*}

In Figure~\ref{fig:porto_combined}(b), we note that the LP and the reverse greedy algorithm achieve comparable and non-trivial compression for $\phi \le 0.4$ relative to the forward greedy baseline. Note in this case that the number of paths is not large enough to ensure a ``uniform convergence'' type behavior in the samples (as in Theorem~\ref{cor:efficiency}), so that the conformal guarantee even with no compression is $\phi \le 0.5$. Nevertheless, we observe that our algorithm provides robust compression in this regime. The edge selections when the LP and Forward Greedy procedures are allowed $797$ edges (corresponding to  $\tau = 0.8$ for the LP) are shown in Figure~\ref{fig:porto_combined}(a). In this setting, greedy  prioritizes an additional highway to the airport, while the LP prioritizes  the city center. 

One interesting observation is that reverse greedy and LP achieve comparable performance on this instance, suggesting that the empirical route distribution does not exhibit strong compressible structure beyond what is captured by frequency-based heuristics.  As mentioned before, our goal in this experiment is to demonstrate that the nested compression approach provides a strong compression guarantee even in the sparse sample regime where the conformal guarantee differs significantly from the $\tau$ used on the training samples. Furthermore, the LP rounding algorithm always provides a compression guarantee in the sense of Theorem~\ref{thm:main-det}, and as the example below shows, reverse greedy (or forward greedy) cannot provide such a guarantee in general.

\begin{example}
Fix some relaxation parameter $\kappa > 0$ as in Theorem~\ref{thm:main-det}. Choose some $\epsilon > 0$ so that $1+\kappa \le \frac{1-\epsilon}{\epsilon}$. There is a path $P$ with $a$ edges between $s$ and $t$ with $w_P = \epsilon$ fraction of traffic. In addition, there is a set $S$ of $b \ll a$ parallel edges between $s$ and $t$ each with $w = \frac{1-\epsilon}{b}$ fraction of traffic. Suppose the goal is to cover $\tau = 1-\epsilon$ fraction of traffic. Then the optimal solution deletes $P$ and incurs cost $b$ in terms of number of edges. Reverse greedy deletes edges from $S$ until the deleted coverage is $(1+\kappa) \cdot \epsilon$ and incur edge cost at least $a \gg b$, so that no worst-case compression guarantee analogous to Theorem~\ref{thm:main-det} is possible.
\end{example}

\subsection{Trip Planning}
\label{app:trip}

\begin{figure}[htbp]
    \centering
    \includegraphics[width=0.6\linewidth]{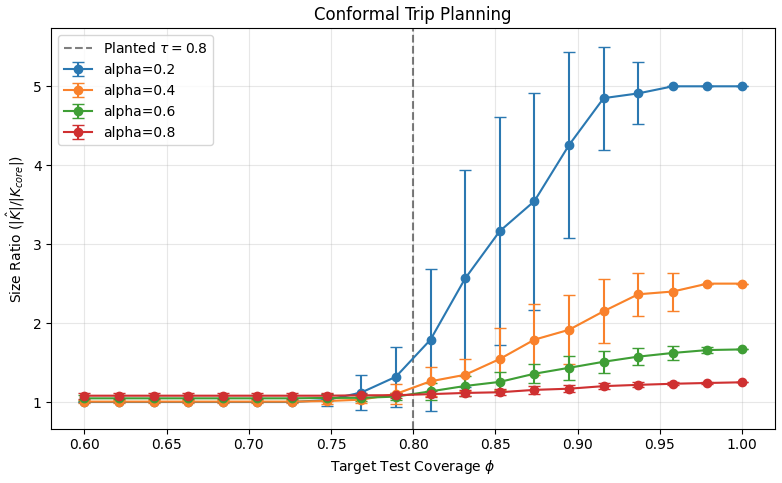} 
    \caption{Trip Planning simulation with planted core probability $\tau=0.8$. The x-axis is the conformal guarantee $\phi$, and the y-axis represents the ratio of selected vertices to the optimal core size. }
    \label{fig:trip_planning}
\end{figure}

We next investigate a structured recommendation scenario where the goal is to recommend a bundle of activities (e.g., a vacation itinerary). We simulate a domain with $R=5$ distinct types of activities (e.g., dining, museums, hiking). For each type $r$, there is a set $A_r$ of available activities with $|A_r| = 10$. Within each type, we designate a "core" subset $A^*_r$ representing high-probability options, where $|A^*_r| = \alpha |A_r|$ for a density parameter $\alpha \in [0,1]$. The total vertex set is the union of all $A_r$. We generate hyperedges (itineraries) by selecting exactly one activity from each type. The generation process follows a planted model: For each type $r$ independently, the activity is drawn uniformly from the core $A^*_r$ with probability $p$, and uniformly from the complement $A_r \setminus A^*_r$ with probability $1-p$. The parameter $p$ is calibrated such that $p^R = \tau$, meaning that a "pure core" itinerary (composed entirely of core activities) occurs with probability $\tau$. In this setting, the ideal conformal set for mass $\tau$ is exactly the union of the core subsets, having a total size of $\sum |A^*_r| = \alpha \cdot |A_r| \cdot R$.

We fix the planted core probability $\tau = 0.8$ and vary the core density $\alpha \in \{0.2, 0.4, 0.6, 0.8\}$.  As before we generate $100$ training samples $\{Y_j\}_{j=1}^{T/2}$ and run the algorithm in Section~\ref{sec:mon} to obtain a nested sequence of subgraphs. For each conformal parameter $\phi$, we find the smallest subgraph achieving coverage $\phi$ on $100$ held out samples $\{Y_j\}_{j=T/2+1}^{T}$. (In other words, $T = 200$ in the algorithm in Section~\ref{sec:fixed_optimal}.) We plot the ratio of the number of vertices in the subgraph to the size of the ground-truth core, as a function of $\phi$ in Figure~\ref{fig:trip_planning}. A ratio of $1.0$ implies perfect recovery of the core.

The results highlight two key findings.  First, for all values of $\alpha$, the ratio remains strictly at $1.0$ as long as the target coverage $\phi$ is at most $0.75$, close to the true core probability $\tau$. This confirms that the LP rounding scheme correctly prioritizes the high-density core structure.  Second, we observe a sharp ``knee'' in the performance curves around $\phi = 0.8$. To satisfy a coverage request $\phi > 0.8$, the algorithm is forced to include hyperedges that contain non-core activities. Capturing this marginal probability mass requires adding a large number of new vertices, causing the ratio to rise sharply.   

\end{document}